\def\eqref#1{equation~\ref{#1}}
\def\1{\bm{1}}
\DeclareMathAlphabet{\mathsfit}{\encodingdefault}{\sfdefault}{m}{sl}
\SetMathAlphabet{\mathsfit}{bold}{\encodingdefault}{\sfdefault}{bx}{n}
\newtheorem{thmasmp}{Assumption}
\newtheorem{thmprop}{Proposition}
\title{On the effects of similarity metrics in decentralized deep learning under distributional shift}
\author{\name Edvin Listo Zec \email edvin.l.zec@gmail.com \\
      \addr RISE Research Institutes of Sweden\\
      KTH Royal Institute of Technology
      \ANDa
      \name Tom Hagander \\
      \addr Lund University
      \ANDa
      \name Eric Ihre-Thomason \\
      \addr Lund University
      \ANDa
      \name Sarunas Girdzijauskas \\
      \addr KTH Royal Institute of Technology
      }
\begin{document}

\maketitle

\begin{abstract}
Decentralized Learning (DL) enables privacy-preserving collaboration among organizations or users to enhance the performance of local deep learning models. However, model aggregation becomes challenging when client data is heterogeneous, and identifying compatible collaborators without direct data exchange remains a pressing issue. In this paper, we investigate the effectiveness of various similarity metrics in DL for identifying peers for model merging, conducting an empirical analysis across multiple datasets with distribution shifts. Our research provides insights into the performance of these metrics, examining their role in facilitating effective collaboration. By exploring the strengths and limitations of these metrics, we contribute to the development of robust DL methods.
\end{abstract}

\section{Introduction}
In recent years, there has been a significant increase in interest towards distributed machine learning approaches, driven primarily by growing concerns over user privacy and data security. This shift is driven by regulations such as the California Consumer Privacy Act (CCPA), General Data Protection Regulation (GDPR), and the Data Protection Act, alongside a rising awareness among users about their data rights. These developments necessitate advancements in privacy-preserving methods. Additionally, many organizations possess proprietary data that cannot be shared externally, further fueling the need for decentralized frameworks.

As machine learning models become more sophisticated and continue to scale, they exhibit superior performance across a diverse array of tasks. Yet, these models rely heavily on extensive datasets, often compiled from various online sources. This poses significant issues regarding data ownership and privacy. Therefore, decentralized approaches are critical in mitigating these issues by enabling collaborative model training without the necessity for a centralized dataset repository. In this framework, no single entity controls a vast dataset. Instead, multiple stakeholders participate in model training through a decentralized and confidential approach.

Among these approaches, federated learning (FL) has emerged as a leading framework for distributed deep learning, allowing the training of a global model across multiple clients that maintain private datasets \citep{mcmahan2017communication}. Various studies have aimed to address \textit{client drift}—a phenomenon occurring when clients pursue divergent optimization objectives due to distributional shifts between datasets, resulting in diminished performance upon averaging their model parameters. To counter this, several strategies introduce a regularization objective into the global training goal to minimize deviations between the global model and individual local models \citep{li2020federated, karimireddy2020scaffold, li2021model}. This approach is effective when clients share similar datasets, sampled from the same underlying data generating process. In these cases, a common global model is optimal. Conversely, in scenarios where clients' data originate from distinct generating processes, these methods can be suboptimal and, in some cases, detrimental. \citet{ghosh2020efficient,vardhan2024improved} address this issue within centralized federated learning by training multiple global models—specifically, $K$ models corresponding to the number of data clusters, with clustering determined by empirical loss to approximate data similarity. Similarly, \citet{sattler2020clustered} employ cosine similarity on local gradients for clustering clients in FL. Although effective, this leads to a high computational cost as the similarity is computed at stationary solutions of the FL objective. Moreover, there is other notable work on clustered centralized FL which also consider cosine similarity or Euclidean distance between parameters~\citep{briggs2020federated,duan2021fedgroup,ruan2022fedsoft}.

\subsection{Technical challenges and related work}
The aforementioned approaches necessitate a central server. In contrast, our work explores fully decentralized methods for identifying client clusters, which are more suitable for large-scale applications. While federated learning enables distributed training of a global model, its reliance on a central server can become a limiting factor and a bottleneck as the number of clients grows. This limitation has spurred interest in fully decentralized systems, known as decentralized learning (DL), where clients interact within a peer-to-peer network, eliminating the need for a central server. Decentralized learning is particularly advantageous for its scalability and inherent robustness against single-point failures or targeted attacks. An illustration of different machine learning frameworks is shown in Figure \ref{fig:intro}.

One notable approach within decentralized learning is gossip learning, a peer-to-peer communication protocol, which has been extensively studied in various machine learning contexts \citep{kempe2003gossip, boyd2006randomized, ormandi2013gossip,hegedHus2019gossip}. Initial decentralized work using gossip-based optimization for non-convex deep learning tasks, specifically with convolutional neural networks (CNNs), was explored by \citet{blot2016gossip}. However, gossip learning faces challenges in non independent and identically distributed (non-iid) settings where there is a distributional shift across client datasets. This is primarily due to the inadequacy of gossip learning in selectively identifying beneficial peers during the training process. To address these challenges, recent work has addressed decentralized learning within non-iid contexts. \citet{onoszko2021decentralized,li2022towards} introduced client selection algorithms based on empirical loss, employing distinct discovery and selections phases. These studies address challenges in decentralized learning within non-iid settings but require extensive hyperparameter tuning, particularly regarding search length, and rely on hard cluster assignments for clients. Such hard clustering can be detrimental if the discovery phase is noisy. To mitigate these issues, \citet{zec2022decentralized} proposed a sampling strategy that converts the similarity scores into sampling probabilities, which are used each communication round to sample collaborators. This approach identifies similar clients and facilitates model merging in proportion to their similarity. By adopting a soft clustering technique, it enhances the efficiency of peer collaboration, circumventing the need for the computationally intensive and difficult-to-tune discovery phase required by hard clustering methods. Additionally, \citet{zec2024efficient} introduced a framework using multi-armed bandits for client selection, where the selection process is modeled on empirical rewards derived from validation accuracy.

Quasi-Global Momentum (QGM) \citep{lin2021quasi}, Neighbourhood Gradient Mean (NGM) \citep{aketi2023neighborhood} and Data-Heterogeneity-Aware Mixing (DHM) \citep{dandi2022data} are contemporary approaches that address the challenges posed by non-iid data in decentralized learning. While the methods focus on improving consensus optimization, our work diverges significantly in its objectives and assumptions. Specifically, we aim to tackle the problem of varying data distributions across clusters rather than optimizing for a shared consensus model.

\subsection{Our contributions}
Previous research has predominantly employed empirical loss as a proxy for data similarity \citep{ghosh2019robust,onoszko2021decentralized,zec2022decentralized}, while some studies have investigated similarity using gradients \citep{li2022towards,sattler2020clustered}. Despite significant progress, the impact of the chosen similarity metric on decentralized learning, especially under distribution shifts between client datasets, remains insufficiently explored. In this paper, we fill this gap by thoroughly study different similarity metrics in decentralized learning through extensive empirical experiments on common benchmark datasets.

In summary, a critical question persists which we seek to answer: \textit{how does the choice of similarity metric influence client identification and the performance of decentralized learning?} 

This question is crucial because the effectiveness of decentralized learning heavily relies on accurately identifying and engaging similar peers. To address this gap, we present a comprehensive empirical study exploring the impacts of four distinct similarity metrics: empirical loss, cosine similarity on gradients, cosine similarity on model weights, and $L^2$ distance on model weights. 

Furthermore, we introduce a novel aggregation method for decentralized learning called Federated Similarity Averaging (FedSim). Unlike the traditional Federated Averaging (FedAvg) approach, which aggregates models based on training set sizes, FedSim merges client models using a weighted average based on their similarities. This approach allows clients to reduce the influence of dissimilar peers on their local model, thereby mitigating the impact of incorrectly sampled clients. In our experiments, we compare FedSim against FedAvg across various similarity metrics and demonstrate that in several scenarios, FedSim can significantly enhance performance.

This study not only fills a gap in understanding the role of similarity metrics in decentralized learning but also offers practical insights into optimizing collaboration in environments subject to distributional shifts. Our experimental results show that empirical loss can be a noisy estimator of data similarity in many cases, especially when the number of samples is small, leading to subpar client identification and negatively affecting performance. At the same time, we show that there are other metrics that can be more robust. By systematically evaluating these metrics, we provide a deeper understanding of their influence on the performance and efficiency of decentralized learning systems, hopefully guiding future research and application in this domain.

\begin{figure}[h]
    \centering
    \begin{subfigure}[b]{0.3\textwidth}
        \centering
        \includegraphics[width=\textwidth]{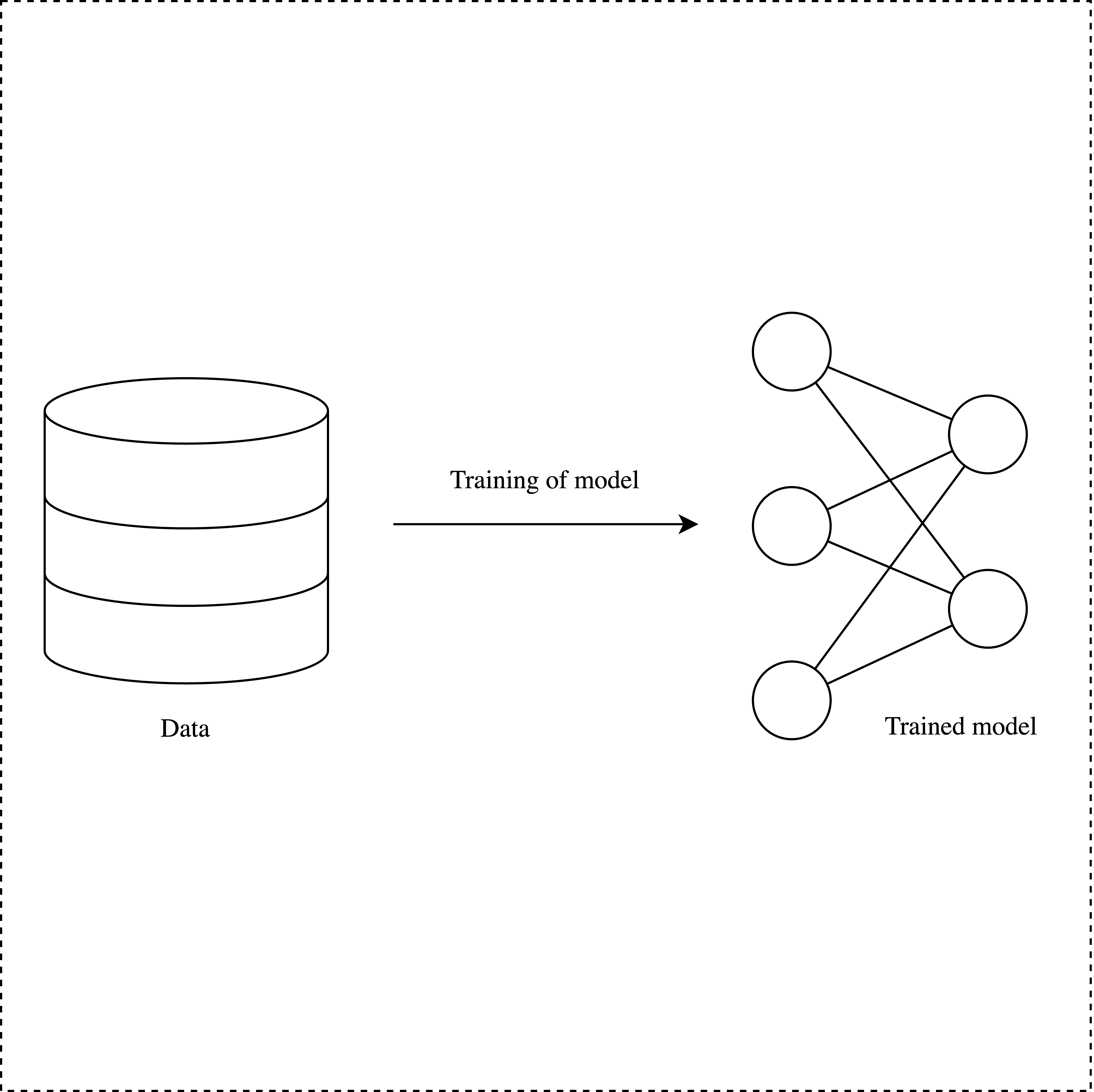}
        \caption{Local learning.}
        \label{fig:intro1}
    \end{subfigure}
    \hfill
    \begin{subfigure}[b]{0.3\textwidth}
        \centering
        \includegraphics[width=\textwidth]{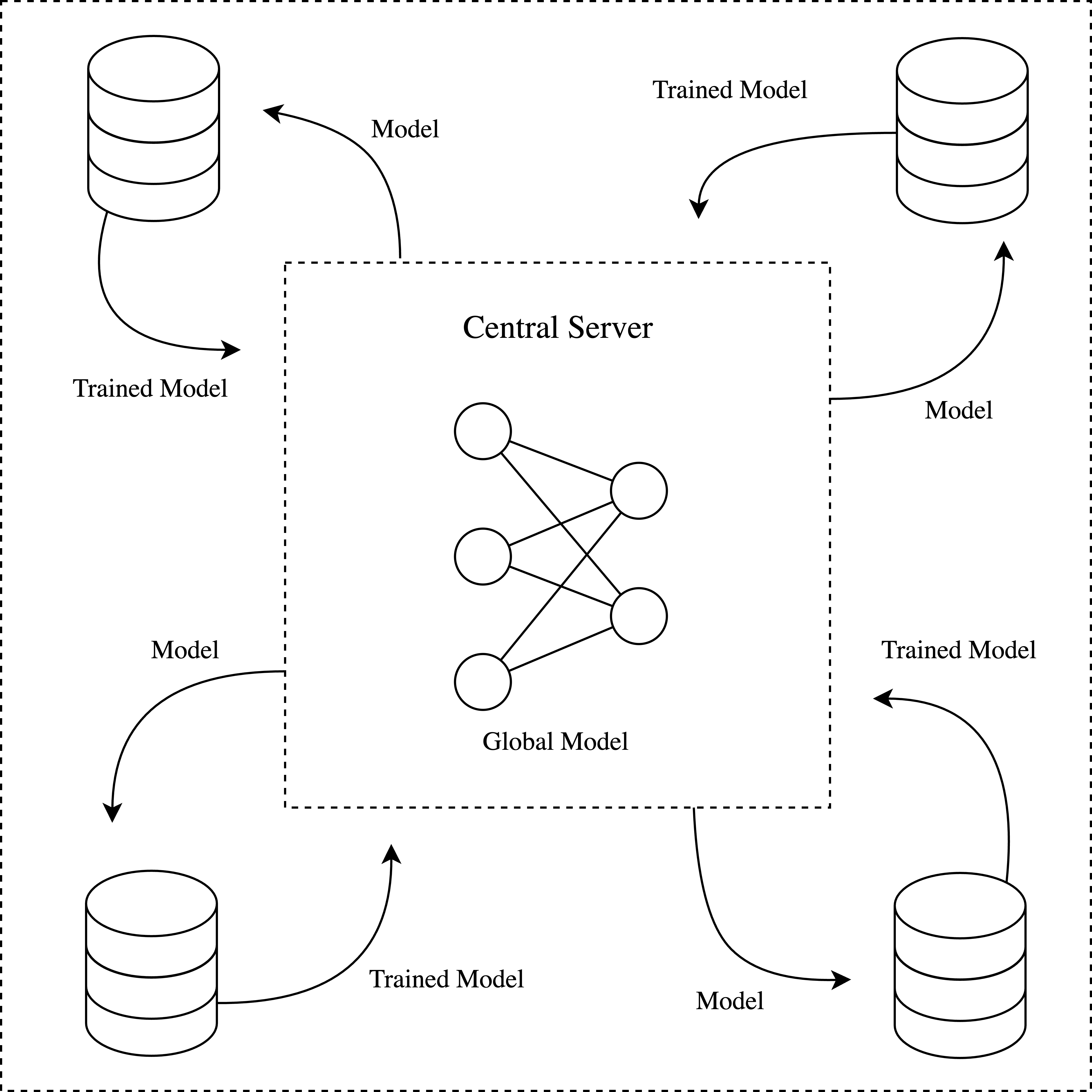}
        \caption{Federated learning.}
        \label{fig:intro2}
    \end{subfigure}
        \hfill
    \begin{subfigure}[b]{0.3\textwidth}
        \centering
        \includegraphics[width=\textwidth]{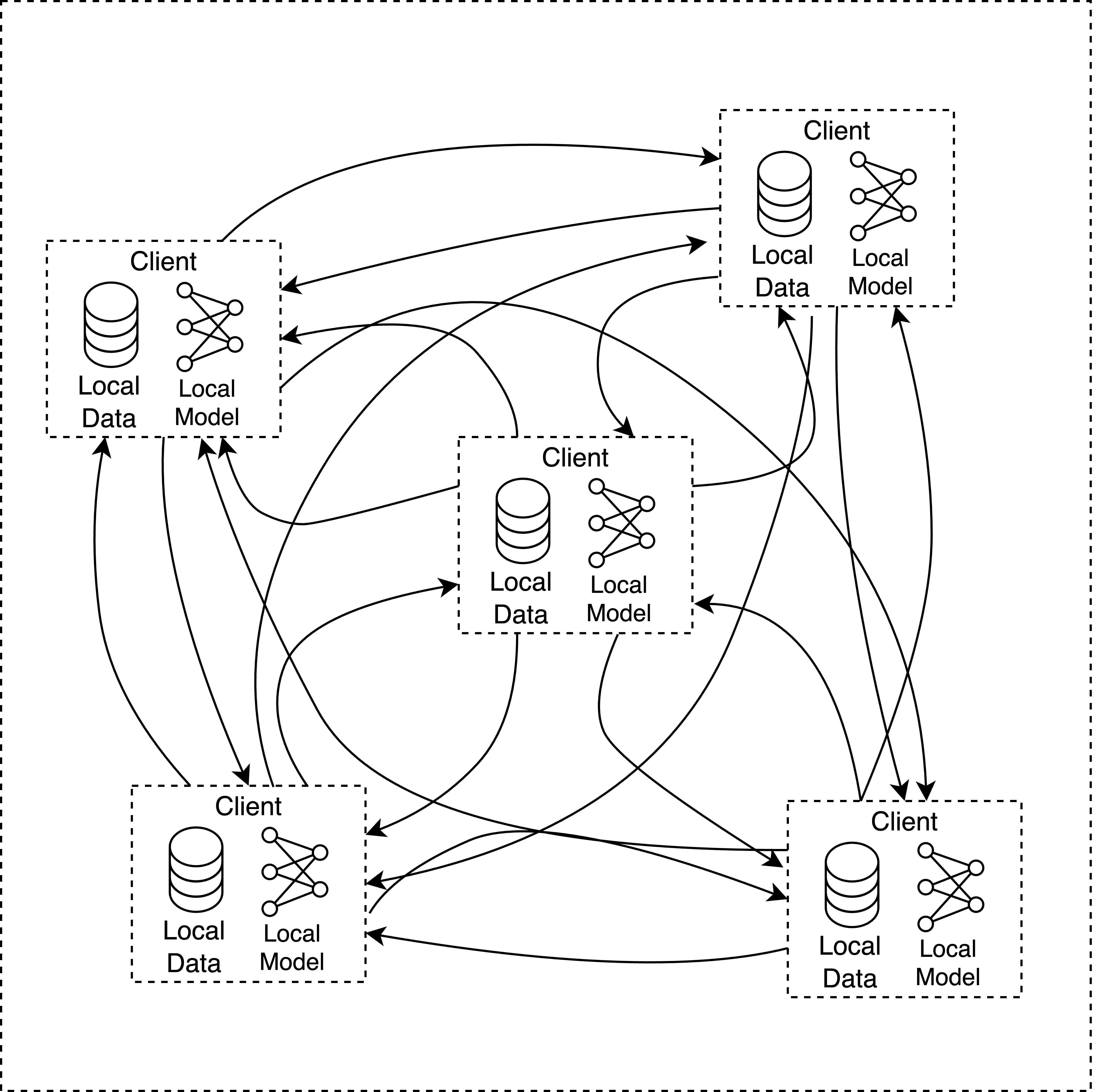}
        \caption{Decentralized learning.}
        \label{fig:intro3}
    \end{subfigure}
    \caption{Illustrations of different types of machine learning paradigms. (a) In traditional ML, a model is trained on a centralized dataset. (b) In FL, a central server orchestrates the training of a global model across multiple private datasets. (c) In DL, there is no central server; instead clients communicate and merge models in a peer-to-peer network.}
    \label{fig:intro}
\end{figure}

\section{Problem formulation}
Consider a set of clients denoted by $\mathcal{S}$, each possessing a local and \textbf{private} dataset $\mathcal{D}_i = \{ (x_n,y_n) \}_{n=1}^{N_i}$ with $N_i$ samples, sampled from an underlying probability distribution $P_i(x,y)$ for $i\in\mathcal{S}$. In decentralized learning, we have a network of $K=|\mathcal{S}|$ clients, each with a local optimization objective consisting of a loss function $\ell$ over model parameters $w_i$. The primary goal for each client $i$ is to learn an optimal model $f_i^\star$ parameterized by $w_i^\star$, which minimizes the following objective: 
\begin{equation}
\label{eq:erm}
R(w_i) = \int \ell(f_i(w_i;x),y) dP_i(x,y).  
\end{equation}
Given that the true distribution $P(x,y)$ is unobserved, direct optimization of \eqref{eq:erm} is not possible. Instead, each client performs empirical risk minimization (ERM) locally by minimizing
\begin{equation}\label{eq:objective}
    \hat{R}(w_i) = \mathbb{E}_{(x,y)\sim \mathcal{D}_i}\left[ \ell(f_i(w_i;x), y)\right],
\end{equation}
to obtain $w^\star_i = \arg\min_{w_i\in\mathbb{R}^d} \hat{R}(w_i)$.

However, the sample size $N_i$ of each local dataset $\mathcal{D}_i$ is assumed to be insufficient for solving \eqref{eq:objective}. Consequently, clients need to collaborate by merging models with peers to solve their local optimization problems. For any client $i$, each communication round $t$ starts with $E$ local epochs of training where the local model is updated using an optimizer, such as stochastic gradient descent (SGD). Then, client $i$ interacts with a random subset of clients $\mathcal{M}^t \subset \mathcal{S}$, merging models through federated averaging \citep{mcmahan2017communication}. This process involves calculating a weighted average of the model parameters, resulting in an updated model:
\begin{equation}
w^{t+1} = \sum_{i=1}^m \frac{N_i}{N} w^t_i,
\end{equation}
where $N = \sum_{i=1}^m N_i$ and $m = |\mathcal{M}|$.  After this aggregation, local training resumes for $E$ epochs, and the process repeats for $T$ rounds. In summary, there are $K$ models in the communication network and each client uses this collaborative approach to enhance model performance locally while preserving data privacy.

In standard federated and decentralized learning, it is implicitly assumed that a single model can effectively fit all client data simultaneously. This assumption holds true in an ideal scenario where all clients share the same data generating process, i.e., $P_i = P_j$ for all $i,j \in \mathcal{S}$. In such cases, decentralized learning with random communication (also known as gossip learning) is effective. However, when clients form clusters with distinct data generating processes, i.e., $P_i \neq P_j$, merging models from clients with different distributions can significantly hinder convergence or even be detrimental \citep{zhao2018federated, kairouz2021advances}. This is because the objective function in \eqref{eq:erm} varies between clients, and there may not be a single model $w_i^\star$ that achieves zero risk across all distributions.  Thus,  the challenge for each client is to identify subsets of clients with similar data distributions to ensure that merging models is beneficial, all while preserving the privacy of the datasets.

\subsection{Similarity based peer selection}
In our problem setup, we assume that the number of unique data distributions $P_k$ is fewer than the total number of clients $K$, implying certain devices share data generation processes and form clusters. Thus, each client's task is to identify peers within their respective clusters with whom they can merge models to optimize their local objectives while preserving data privacy. This task essentially becomes a combinatorial selection problem where each client must select $m$ peers from $\binom{K}{m}$ possible combinations during each communication round. The objective for each client is to select the most appropriate $m$ peers each round to minimize its local objective function. When all client datasets are sampled from the same joint probability distribution $P_k(x,y)$ and are partitioned independently and identically distributed (iid), random communication -- i.e., uniform sampling of $\mathcal{M}^t \subset \mathcal{S}$ -- is effective. However, the goal of similarity-based sampling is to strategically select the set $\mathcal{M}^t$ and merge models in a manner that enables each client to minimize the objective specified in \eqref{eq:erm}. This approach aims to enhance the efficiency and performance of decentralized learning by leveraging the inherent similarities between clients' data distributions.

Since local datasets are private, traditional measures cannot be used to calculate similarities between them directly for clustering clients. Instead, we rely on model parameters or gradients to infer similar datasets. To maintain simplicity, we adopt the decentralized adaptive clustering (DAC) framework \citep{zec2022decentralized}. In this framework, each client $i$ stores a probability vector $\mathbf{p_i^t}\in \mathbb{R}^K$ which is used to sample new clients $\mathcal{M}^t$, where each element $p^t_{ij}$ represents the probability for client $i$ to sample client $j$ at time $t$. This probability vector is calculated using a softmax function and a similarity metric $\mathbf{s_i}$ every time step $t$, as follows:
\begin{equation}
    \label{eq:softmax}
    \mathbf{p_{i}}(\mathbf{s_i}) = \frac{e^{\tau \mathbf{s_i}}}{\sum_{k=1}^K e^{\tau s_{ik}}}.
\end{equation}

Here, $\tau$ is the inverse temperature parameter controlling the distribution of the softmax, i.e. $\tau=0$ gives a uniform distribution and as $\tau\to\infty$ we reach the argmax function. This is a hyperparameter that is problem dependent and that needs to be optimized. 

The selection of similarity metric $\mathbf{s_i}$ is critical in the DAC framework as it influences how effectively similar clients are identified and, consequently, how efficiently the models converge. \citet{zec2022decentralized} used inverse empirical loss as a similarity metric, defined by evaluating client $i$'s model $w_i^t$ at time $t$ on the training data of client $k$, expressed as $s^t_{ik} = \left(\sum_{x_k, y_k \in \mathcal{D}_k} \ell(w_i^t; x_k, y_k)\right)^{-1}$. This metric can be computed without sharing data samples, as only model parameters and similarity scores are exchanged between clients $i$ and $k$. The authors demonstrated that this method effectively identifies similar clients over time in decentralized learning. However, this metric may have drawbacks since its performance depends on sample size and quality. Additionally, the authors did not compare it with other similarity metrics within this framework. Therefore, our work aims to explore how different similarity metrics impact client selection and performance in DL.

Similar to previous studies, this work assumes a fully connected network where all clients can communicate directly with each other, facilitated by communication protocols such as those available via the internet. We also assume synchronous client communication, as our focus is not on system heterogeneity. While exploring the impact of constrained network topologies or varying levels of client connectivity on performance is a compelling area for future research, it is beyond the scope of this paper.

\begin{algorithm}
\caption{Decentralized learning protocol for a client $i$}
\begin{algorithmic}[1]
\STATE \textbf{Input}: model parameters $w_i$, temperature $\tau$, learning rate $\eta$, dataset $x_i, y_i \sim \mathcal{D}_i$.
\STATE Initialize prior probabilities: $p_{ij} = \frac{1}{K-1}$ 
\FOR{each communication round $t = 1, \dots, T$}
    \STATE Select $m \leq K-1$ clients $\mathcal{M}^t_i = \{c_1, c_2, \dots, c_m\}$ without replacement, according to the probability distribution $\mathbf{p}_i^t = [p^t_{i,1}, p^t_{i,2}, \dots, p^t_{i,K-1}]$. 
    \STATE Compute similarity scores $s_{ij}^t$ between client $i$ and each sampled client $j \in \mathcal{M}^t_i$.
    \STATE Update probability vector via softmax: $\mathbf{p}_i^t \leftarrow \texttt{SOFTMAX}(\mathbf{s}_i^t, \tau)$.
    \STATE For FedSim, normalize the similarities: $\mathbf{\hat{s}}_i = \frac{\mathbf{s_{i}}}{\sum_j s_{ij}}$
    \STATE Aggregate $w_i^{t}$ with the models from the sampled clients: 
    \[ w_i^{\text{merged}} \leftarrow \sum_{j=1}^{m+1} \alpha_j w_j, \]
    where $\alpha_j = \frac{n_j}{N}$ for FedAvg, and $\alpha_j = \hat{s}_j$ for FedSim.
    \STATE Update the merged model locally for $E$ epochs:
    \[ w_i^{t+1} \leftarrow w_i^{\text{merged}} - \eta \nabla_{w_i} \ell (w_i^{\text{merged}}; x_i, y_i) \]
\ENDFOR
\RETURN Final model parameters for client $i$: $w^T_i$
\end{algorithmic}
\end{algorithm}

\section{Experimental setup}
In this section, we describe our experimental setup for studying the effect of common similarity metrics on client identification within decentralized learning. Empirical loss and cosine similarity on gradients have been previously employed \citep{onoszko2021decentralized,li2022towards,zec2022decentralized}. Additionally, we introduce the Euclidean distance ($L^2$) between model parameters and cosine similarity between parameters (as opposed to gradients) as two additional baselines, which have not been tested before in DL for client identification. We define the similarity metrics between two clients $i$ and $j$ as follows:

\begin{itemize}
    \item Inverse empirical loss = $\left( \sum_{(x,y) \in \mathcal{D}_j} \ell(w_i;x, y) \right)^{-1}$.
    \item Inverse $L^2$ distance = $\|w_i - w_j\|_2 ^{-1}$.
    \item Cosine similarity of weights = $\frac{w_i \cdot w_j}{\|w_i\| \|w_j\|}$.
    \item Cosine similarity of gradients = $\frac{g_i \cdot g_j}{\|g_i\| \|g_j\|}$.
\end{itemize}

For cosine similarity of gradients, we define the gradient at time $t$ as:
\begin{equation}
\label{eq:gradient}
    g^t_i = w^t_i - w^0_i,
\end{equation}
where $g^t_i$ represents the vector describing the direction from the initialized weights to the current weights of the network.

\subsection{Computational cost analysis}

When selecting similarity metrics for client identification in decentralized learning, it is crucial to consider their computational overhead, especially in resource-constrained decentralized settings. In this section, we analyze the computational complexity of each of the four similarity metrics at each communication round, focusing on the operations required to compute the similarity score itself, rather than the cost of training epochs.

\paragraph{Inverse empirical loss.} To compute the inverse empirical loss as a similarity metric between client $i$ and client $j$, we evaluate client $i$'s model with parameters $w_i$ on \textit{every data point} $(x, y)$ in client $j$'s local dataset $\mathcal{D}_j$. This necessitates performing a forward pass through the model for each data point in $\mathcal{D}_j$, resulting in a computational complexity of $\mathcal{O}(N_j \times C_f)$, where $N_j$ is the size of the dataset $\mathcal{D}_j$ and $C_f$ is the cost of a single forward pass.  This approach, which deterministically evaluates the loss over the entire dataset $\mathcal{D}_j$, imposes a significant computational burden, especially with large datasets or complex models, making it less feasible in practice. While empirical loss could be stochastically estimated using a subset of $\mathcal{D}_j$ (e.g., a mini-batch), we use the full dataset to obtain a more deterministic and potentially more reliable measure of functional similarity for guiding peer selection.

\paragraph{Inverse $L^2$ distance.} The inverse Euclidean distance between model parameters $w_i$ and $w_j$ is computed directly from the parameter vectors and has a computational complexity of $\mathcal{O}(P)$, where $P$ is the number of model parameters. This calculation is deterministic and scales linearly with model size, remaining feasible even for large models.

\paragraph{Cosine similarity of weights and gradients.} Both the cosine similarity of weights and the cosine similarity of gradients, as defined by \eqref{eq:gradient} using $g^t_i = w^t_i - w^0_i$, have a computational complexity of $\mathcal{O}(P)$, similar to the inverse $L^2$ distance. These metrics involve computing dot products and norms of vectors of size $P$, which are deterministic operations manageable even for models with a large number of parameters.

\paragraph{Practical example.} To illustrate the practical implications of these computational costs, consider a moderately sized CNN with $P = 1{,}000{,}000$ parameters and a local dataset size of $N_j = 10{,}000$ samples per client. Assuming an idealized scenario with efficient hardware where the cost of a single forward pass $C_f$ is approximately $10$ milliseconds, computing the inverse empirical loss for similarity assessment would require a total time of $N_j \times C_f = 10,000 \times 10 \;\text{ms} = 100,000 \;\text{ms} = 100 \;\text{s}$. This calculation focuses solely on computation time and does not account for potential communication overhead in a decentralized setting.

In contrast, computing the inverse $L^2$ distance or cosine similarities involves approximately $P = 1{,}000{,}000$ basic vector operations (dot products, norms, subtractions). Assuming each operation takes approximately $1$ nanosecond, the total time is roughly $P\times 1 \;\text{ns} = 1{,}000{,}000 \;\text{ns}= 1 \; \text{ms}$. This example, while simplified, highlights the order of magnitude difference in computational cost between empirical loss and the other parameter-based similarity metrics, emphasizing the trade-offs between functional similarity measures and computational efficiency.

\subsection{Distributional shifts in decentralized learning}
When local client data generating processes are identical, meaning $P_i(x,y) = P_j(x,y)\; \forall i,j$, random communication is optimal as all clients are learning the same model. However, this assumption is often unrealistic. In practical scenarios, distribution shifts between clients are common, leading to client data heterogeneity (non-iid data). Given the variety of ways two probability distributions can differ, it is crucial to formally describe the type of shift to understand the effectiveness of decentralized learning solutions.

Consider the factorized form of the joint probability distribution:
\begin{equation}
    P(x,y) = P(x|y)P(y) = P(y|x)P(x).
\end{equation}
Using this factorization, we can describe some common types of distribution shifts. In this paper, we investigate the following distributional shifts in the context of decentralized learning.
\begin{enumerate}
    \item \textbf{Covariate shift:} This occurs when $P(x)$ varies between clients while the conditional distribution $P(y|x)$ remains constant. For instance, different hospitals may use different x-ray machines, leading to variations in the input features collected, while the diagnosis based on the x-rays remains consistent across hospitals.
    \item \textbf{Label shift:} This shift is present when $P(y)$ varies between clients but $P(x|y)$ is shared. An example of this scenario is mobile phone users taking photos of different types of animals, resulting in a variation of the label distribution across users.
    \item \textbf{Concept shift:} This type of shift occurs when $P(y|x)$ varies but $P(x)$ stays the same. In this case, the same inputs result in different labels. This can occur when modeling user preferences or sentiment.
    \item \textbf{Domain shift:} This represents a more general shift where both the conditionals $P(y|x)$ and the marginals $P(x)$ differ simultaneously across clusters. For example, consider a voice recognition system trained on data from adults speaking in quiet environments. If the system is deployed to recognize children's voices in noisy environments like playgrounds, both the input features $x$ (voice characteristics and background noise) and the relationship between these features and the labels $y$ (spoken words) change significantly, illustrating a domain shift.
\end{enumerate}

Understanding these shifts is essential for the development and evaluation of decentralized learning algorithms. We evaluate the performance of similarity metrics using ERM in a decentralized setting on four datasets in the presence of distributional shifts. In all cases, we evaluate each client model $w_i$ on a test set drawn from its underlying data generating process $P_i(x,y)$. The first dataset is a synthetic one, while the three others are classic benchmark image datasets (MNIST, CIFAR-10 and Fashion-MNIST). We create different clusters of the datasets in order to simulate heterogeneous data distributions among clients. The synthetic dataset is used to study concept shift. For the benchmark datasets, we study covariate shift, label shift and domain shift. A summary of our experimental setup is shown in Table \ref{tab:experimental_summary}.

\begin{table}[h]
\centering
\begin{tabular}{c|c|c|c|c}
\hline
Dataset & Shift & Model & No. of clusters & No. of clients \\ \hline
CIFAR-10 & Label & CNN & 2 & 100 \\ 
CIFAR-10 & Label & CNN & 5 & 100 \\ 
Synthetic data & Concept & Linear & 3 & 99 \\ 
Fashion-MNIST & Covariate & CNN & 4 & 100 \\ 
Fashion-MNIST + MNIST & Domain & MLP \& CNN & 2 & 100 \\ 
CIFAR-100 & Label & Pre-trained ResNet-18 & 3 & 52 \\ \hline
\end{tabular}
\caption{Summary of our experimental setup.}
\label{tab:experimental_summary}
\end{table}

\subsection{Model selection and hyperparameter tuning}
In both federated and decentralized learning environments, the process of model selection and the setting of hyperparameters are crucial yet often under-discussed elements. A common challenge in the literature is the lack of detailed reporting on hyperparameter choices in decentralized settings, which complicates the task of conducting fair and consistent methodological comparisons. Variations in results that stem from differing hyperparameter tuning practices can be mistakenly attributed to the intrinsic qualities of the algorithms themselves, thereby obscuring accurate evaluations. In practice, while some studies use a global validation set, others may employ a local validation set specific to each client, and still others might not clearly disclose their hyperparameter selection strategy.

In our investigation, we strive for a rigorous comparison across all models and similarity metrics by using a local validation set for each client. We tune all hyperparameters using these validation sets, and employ early stopping on each client locally. During communication, some clients may reach early stopping before others. When this occurs, these clients continue to broadcast their models to other clients but cease updating their own models through further training or merging. This approach ensures that our evaluation is both relevant and tailored to the specific conditions of each client's data. Additionally, we introduce a comparison against an Oracle baseline, which has access to clients with the same data distribution and only communicates with those, providing a benchmark under ideal conditions, and a Local baseline which only performs local training of the available data on a single client.

After hyperparameters were selected, each method was run three independent times, and we report the means and standard deviations across these runs. The found hyperparameters for each problem is presented in Appendix \ref{appendixA} in Table \ref{tab:hparam}, together with more experimental details. For the synthetic dataset, linear models trained with SGD were used. In these experiments, 15 independent runs were made for each method. For the benchmark datasets, the models comprised neural networks with two convolutional layers followed by two linear layers. In the domain shift experiment, we also employed a single-layer MLP with ReLU activation. While these models do not represent the state-of-the-art for these tasks, they provide sufficient complexity to discern meaningful differences between the methods under study. This setup not only highlights the capabilities of the similarity metrics within a controlled experimental environment but also provides insights into their practical applicability in more realistic, decentralized scenarios.

\subsection{Distributional shifts and datasets}
\textbf{Concept shift.} We begin by studying a decentralized learning problem using linear regression in a synthetic setup. In this scenario, we assume that each cluster is defined by $y_i = \langle x_i, \theta^*_j \rangle + \varepsilon_i$, for client $i$ and cluster $j$ where $\varepsilon_i$ denotes normally distributed noise independent of the data $x$. Essentially, each cluster has a underlying data generating process defined by $\theta^*_j$ which a model for that cluster aims to learn from the data. If the $\theta^*_j$ vary significantly across clusters, merging models from different clusters would lead to suboptimal models.

To construct the clusters, we generate three distinct $\theta^*_j \in \mathbb{R}^d$, each drawn from a uniform distribution. The feature matrix $X\in \mathbb{R}^{n\times d}$ is populated by entries sampled uniformly from the range $[-10,10]$, forming a $d$ dimensional vector with $n$ samples for each client. Finally, the response variable $y_i$ is computed as $y_i = \langle x_i, \theta^*_j \rangle$. We use mean squared error as the loss function, setting $d=10$ and $n=50$ for each client. We create the clusters such that they contain an equal number of clients. This synthetic setup enables a controlled exploration of the efficacy of similarity metrics in a setting characterized by distinct, non-overlapping data distributions. This method of data generation results in a concept shift between clients -- where $P(y|x)$ varies, while $P(x)$ remains constant.

\textbf{Label shift.} Using the CIFAR-10 dataset \citep{krizhevsky2009learning}, we create two more realistic and complex scenarios. The first scenario divides the dataset into two clusters based on content categories: an animal cluster, comprising four labels, and a vehicle cluster, with six labels. These clusters consist of 40 and 60 clients, respectively. In the second scenario, we create five equally large clusters of 20 clients where each cluster is defined by two unique, randomly sampled labels. In this way, each cluster has a unique data generating process where $P(y)$ varies while $P(x|y)$ remains constant. We also study large scale \textbf{pre-trained models} under label shift. For this, we use the CIFAR-100 dataset \citep{krizhevsky2009learning}, which we split into three distinct clusters of sizes 26, 13 and 13 based on the superclasses.

\textbf{Covariate shift.} To study covariate shift, we use the Fashion-MNIST dataset \citep{xiao2017/online}. We construct four clusters, where each cluster is characterized by a specific rotation angle that defines their respective data generation process $P_k(x,y)$. This setup not only tests the impact of image rotation on model performance but also incorporates varying cluster sizes to mimic realistic client distribution: 70, 20, 5, and 5 clients per cluster. The rotation degrees set for these clusters are $0\degree, 10\degree, 180\degree$ and $350\degree$. This arrangement creates three clusters with relatively similar conditions and one notably distinct cluster -- the $180\degree$ rotation. In this way, each cluster has a unique data generating process where $P(x)$ varies while $P(y|x)$ remains constant.

\textbf{Domain Shift.} We also explore domain shift, a common issue in the multi-task learning literature where a single model attempts to learn multiple domains (or tasks). In this setting, both $P(y|x)$ and $P(x)$ vary between clusters. We use the MNIST \citep{lecun1998gradient} and Fashion-MNIST datasets, defining two clusters by partitioning these datasets into halves, each assigned to different clients.

\subsection{FedSim: leveraging similarity metrics for weighted averaging}
In our empirical studies, we observed that the traditional FedAvg, which weights updates by the number of data samples per client, can lead to catastrophic forgetting when a client's model is combined with a poorly performing model from another client due to sampling a client from a cluster with a different data distribution. To explore potential mitigations, we introduce a novel approach termed Federated Similarity Averaging (FedSim). In FedSim, each client's contribution to the global model is weighted by a similarity metric rather than merely the data count. Specifically, for a single client, the update is computed as:

\begin{equation}
w^{t+1} = \sum_{k=1}^m s_k w^t_k,
\end{equation}

where $s_k$ represents the similarity metric for client $k$, and $w^t_k$ denotes the model parameters from client $k$ at iteration $t$. To incorporate the model updates from the client itself, we assign its weight as the maximum similarity observed among all $s_k$. Subsequently, we normalize these weights to ensure they sum to one. This methodological adjustment aims to prevent the adverse effects observed with traditional federated averaging by prioritizing contributions from clients that have more aligned optimization problems.
\paragraph{Limitations of FedAvg under distribution shift} The FedAvg aggregation does not yield an unbiased estimate of the target distribution $P_k$ that a client associated with $P_k$ aims to learn when merging models trained on other distribution $P_i \neq P_k$. Specifically, the expected value of the aggregated model using FedAvg is

\begin{equation} 
\label{eq:fedavg_bias} 
\mathbb{E}[w^{t+1}] = \sum_{i=1}^m \frac{N_i}{N} \mathbb{E}[w^t_i] = \sum_{i=1}^m \frac{N_i}{N} w_i^\star \neq w_k^\star,
\end{equation}
where $N_i$ is the number of samples for client $i$ and $N=\sum_i N_i$, $w_i^\star$ is the optimal model parameters for distribution $P_i$, and $w_k^\star$ the optimal model parameters for distribution $P_k$.
The last inequality in \eqref{eq:fedavg_bias} arises because the combination of the optimal $w_i^\star$ from \textbf{other distributions} $P_i$ for $i=1,\dots,m$ does not, in general, equal $w^\star_k$.

To address this limitation, we propose using similarity-based averaging. Under the assumption that the target distribution $P_k$ can be expressed as a weighted combination of the sampled client distributions $P_i$, we can show that similarity-based averaging provides and unbiased estimate of the optimal parameters for $P_k$.

\begin{thmasmp}
\label{assumption:weighted_distribution} 
The target $P_k(X,Y)$ can be expressed as a weighted combination of the sampled client distributions, i.e. $P_k(X,Y) = \sum_{i=1}^m s_i P_i(X,Y)$, where $\mathbf{s} = (s_1, s_2, \dots, s_m)$ is a vector of weights satisfying $s_i\geq 0$ and $\sum_{i=1}^m s_i = 1$.
\end{thmasmp}

Under Assumption~\ref{assumption:weighted_distribution}, we have the following proposition.

\begin{thmprop}[Unbiased estimator]
   Consider a single round $t$ in the batch stochastic gradient setting with learning rate $\eta$. Let each sampled client $i\in [m]$ compute its local update $w_{i, t} = w_t - \eta \nabla_w \hat{R}_i(w_t)$, where $\hat{R}_i(w_t)$ is the empirical risk on client $i$. Then, the aggregated update using the similarity weights $\mathbf{s}$ is $w_{t+1} = \sum_{i=1}^m s_i w_{i,t}$. Under Assumption~\ref{assumption:weighted_distribution}, the expected aggregated update satisfies 
    $$
     \mathbb{E}[w_{t+1} \mid w_t] = \mathbb{E}[w_{t+1}^{P_k} \mid w_t],
    $$
    where $w_{t+1}^{P_k} = w_t - \eta \nabla_w \hat{R}_{P_k}(w_t)$  is the batch SGD update that would be obtained using a sample from the target distribution $P_k$.
\end{thmprop}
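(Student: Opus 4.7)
The plan is to compute both sides of the claimed equality directly by pushing the expectation through the linear aggregation and exploiting the standard unbiasedness of the batch SGD gradient. First I would use the linearity of expectation on $w_{t+1} = \sum_{i=1}^{m} s_i w_{i,t}$, writing
\[
\mathbb{E}[w_{t+1} \mid w_t] = \sum_{i=1}^{m} s_i \left( w_t - \eta\, \mathbb{E}[\nabla_w \hat{R}_i(w_t) \mid w_t] \right),
\]
and then use $\sum_{i=1}^{m} s_i = 1$ from Assumption~\ref{assumption:weighted_distribution} to pull out the $w_t$ term, leaving a convex combination of gradients.

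Next I would invoke the standard fact that for batch SGD the mini-batch empirical gradient is an unbiased estimator of the population gradient, so $\mathbb{E}[\nabla_w \hat{R}_i(w_t) \mid w_t] = \nabla_w R_i(w_t)$ with $R_i(w) = \mathbb{E}_{(x,y)\sim P_i}[\ell(w;x,y)]$. Exchanging the finite sum with the gradient (justified by linearity of differentiation) then yields
\[
\mathbb{E}[w_{t+1} \mid w_t] = w_t - \eta\, \nabla_w \sum_{i=1}^{m} s_i\, \mathbb{E}_{(x,y)\sim P_i}[\ell(w_t;x,y)].
\]
Applying Assumption~\ref{assumption:weighted_distribution}, the convex mixture of expectations equals the expectation under the mixture distribution, which by assumption is exactly $P_k$, so the inner quantity collapses to $\mathbb{E}_{(x,y)\sim P_k}[\ell(w_t;x,y)] = R_{P_k}(w_t)$. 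Thus $\mathbb{E}[w_{t+1}\mid w_t] = w_t - \eta\, \nabla_w R_{P_k}(w_t)$.

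On the other side, the same unbiasedness argument applied to $w_{t+1}^{P_k} = w_t - \eta \nabla_w \hat{R}_{P_k}(w_t)$, where the mini-batch is drawn from $P_k$, gives $\mathbb{E}[w_{t+1}^{P_k} \mid w_t] = w_t - \eta\, \nabla_w R_{P_k}(w_t)$, matching the previous display and completing the proof.

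The only delicate step is the interchange of expectation and gradient when moving from $\mathbb{E}[\nabla_w \hat{R}_i(w_t)\mid w_t]$ to $\nabla_w R_i(w_t)$; this is the usual dominated-convergence argument and holds under mild regularity on $\ell$ (e.g.\ integrable gradient, which I would simply note as a standing assumption of the batch SGD setting). The rest is bookkeeping: linearity of expectation, the constraint $\sum_i s_i = 1$, and a single application of Assumption~\ref{assumption:weighted_distribution} to identify the mixture $\sum_i s_i P_i$ with $P_k$. No contractions, concentration bounds, or iterative arguments are needed since the claim concerns a single aggregation round.
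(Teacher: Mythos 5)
Your proposal is correct and follows essentially the same route as the paper's proof sketch: linearity of expectation, the normalization $\sum_i s_i = 1$, unbiasedness of the empirical gradient, and Assumption~\ref{assumption:weighted_distribution} to identify the mixture $\sum_i s_i P_i$ with $P_k$. You are somewhat more explicit than the paper about the expectation--gradient interchange and about applying the unbiasedness argument to the right-hand side $\mathbb{E}[w_{t+1}^{P_k}\mid w_t]$ as well, but the underlying argument is identical.
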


\begin{proof}[Proof sketch] Under Assumption~\ref{assumption:weighted_distribution}, the expected gradient of the aggregated update can be written as 
\begin{equation}
\begin{aligned}
\mathbb{E}[w_{t+1} \mid w_t] &= \sum_{i=1}^m s_i \mathbb{E}[w_{i,t} \mid w_t] = \sum_{i=1}^m s_i \left( w_t - \eta \mathbb{E}[\nabla_w \hat{R}_i(w_t)] \right) = \\
& w_t - \eta \sum_{i=1}^m s_i \mathbb{E}[\nabla_w \hat{R}_i(w_t)] = w_t - \eta \mathbb{E}_{(X,Y) \sim P_k} [\nabla_w \ell(w_t; X,Y)] = w_{t+1}^{P_k}. 
\end{aligned}
\end{equation}
\end{proof}

Under this proposition, the expected value of the model update in FedSim with \emph{perfect} similarity scores $s_i$ is 
\begin{equation} 
\mathbb{E}[w^{t+1}] = \sum_{i=1}^m s_i \mathbb{E}[w_i^t] = \sum_{i=1}^m s_i w_i^\star. \end{equation}
Thus, similarity-based averaging with perfect similarity scores provides an unbiased estimate of the optimal model parameters for the target distribution. We present results from our experimental work analyzing common similarity metrics used in decentralized learning in Section \ref{sec:results}, both for \textit{sampling clients} and for \textit{aggregation using FedSim.}

\section{Results}
\label{sec:results}
In this section, we illustrate the effectiveness of the similarity metrics. Unless otherwise stated, \textit{all results are means over three independent runs}. The main results are presented in Table \ref{tab:main}.

\begin{table}[h!]
\centering
\caption{Performance comparison across methods and tasks on test sets. Mean squared error (MSE) is used for the synthetic experiment, while test accuracy is used for all other tasks. The reported values represent the averages across all clusters for each method and experiment. Best performing method (excluding the Oracle) is denoted with \textbf{bold text}.}
\label{tab:main}
\scriptsize
\begin{tabular}{cccccccc}
\toprule
 & \textbf{Synthetic} ($\downarrow$) & \textbf{CIFAR-10} ($\uparrow$) & \textbf{CIFAR-10} ($\uparrow$) & \textbf{F/MNIST} ($\uparrow$) & \textbf{FMNIST} ($\uparrow$) & \textbf{CIFAR-100} ($\uparrow$) \\ \toprule
No. of clusters & 3 & 2 & 5 & 2 & 4 & 3 \\
Shift & Concept & Label & Label & Domain & Covariate & Label \\ \hline
\multicolumn{1}{l}{$L^2$ \textsc{FedAvg}} & $21.10\pm2.51$ & $59.44 \pm 0.66$ & $84.04\pm 0.63$ & $90.49\pm 0.12$ & $84.17\pm 0.21$ & $\mathbf{55.53\pm0.57}$\\ 
\multicolumn{1}{l}{$L^2$ \textsc{FedSim}} & $10.85\pm0.33$ & $57.63\pm 1.14$ & $84.06 \pm0.31$ & $90.80\pm0.11$ & $84.16\pm0.01$ & -\\ 
\multicolumn{1}{l}{Inv loss \textsc{FedAvg}} & $31.69 \pm3.23$ & $\mathbf{61.62 \pm 0.83}$ & $84.94\pm0.53$ & $89.83 \pm0.22$ & $84.83\pm0.14$ & $54.23\pm0.21$ \\ 
\multicolumn{1}{l}{Inv loss \textsc{FedSim}} & $14.82 \pm0.33$ & $61.08\pm 1.15$ & $85.80\pm0.20$ & $90.67\pm0.07$ & $\mathbf{85.08\pm0.26}$ & -\\
\multicolumn{1}{l}{Cos grad \textsc{FedAvg}} & $10.32 \pm0.23$ & $58.81\pm 0.88$ & $\mathbf{86.22\pm0.17}$ & $\mathbf{91.14\pm0.17}$ & $84.87\pm0.29$ & $53.87\pm0.72$ \\ 
\multicolumn{1}{l}{Cos grad \textsc{FedSim}} & $10.30\pm0.25$ & $58.13\pm 0.75$ & $86.00\pm 0.32$ & $90.72\pm 0.24$ & $83.32\pm0.58$ & -\\ 
\multicolumn{1}{l}{Cos weight \textsc{FedAvg}} & $10.34\pm0.28$ & $59.14\pm 1.62$ & $85.85\pm0.20$ & $91.03\pm0.23$ & $85.04\pm0.13$ & $54.54\pm0.04$\\ 
\multicolumn{1}{l}{Cos weight \textsc{FedSim}} & $\mathbf{10.30 \pm0.21}$ & $59.66\pm1.02$ & $85.66\pm0.36$ & $90.62 \pm0.17$ & $84.82\pm0.42$ & -\\ 
\multicolumn{1}{l}{Random} & $1494.84 \pm1973$ & $59.71\pm1.13$ & $82.73\pm0.33$ & $90.53 \pm0.05$ & $83.70\pm0.12$ & $54.88\pm0.27$ \\ 
\multicolumn{1}{l}{Local} & $30.26 \pm1.73$ & $37.77\pm0.24$ & $77.08\pm0.98$ & $82.22 \pm0.30$ & $76.26\pm0.12$ & $29.49\pm0.03$\\ \hdashline
\multicolumn{1}{l}{\textcolor{gray}{Oracle}} & $\textcolor{gray}{9.43 \pm0.33}$ & \textcolor{gray}{$62.25\pm0.47$} & \textcolor{gray}{$87.36\pm0.20$} & \textcolor{gray}{$91.62 \pm0.08$} & \textcolor{gray}{$85.55\pm0.11$} & \textcolor{gray}{$56.37\pm0.17$} \\ 
\bottomrule
\end{tabular}
\end{table}

\subsection{Concept shift}
\textbf{Synthetic experiment.} In the synthetic experiment, we generated three distinct clusters where the conditional distribution $P(y|x)$ varies between clusters while the marginal $P(x)$ remains constant. \textit{In these experiments, all reported results are means over 15 independent runs.} The findings, illustrated in Figure \ref{fig:toyproblem}, demonstrate a significant improvement in performance using FedSim compared to FedAvg, especially for the inverse loss and $L^2$ metrics. This highlights the challenges associated with merging models from different clusters for FedAvg, as further illustrated in Table \ref{tab:main}, which shows that random communication fails to address the problem effectively. Since $P(y|x)$ differs between clusters, a single model fails to adequately represent all the data. In this scenario, the Oracle method, represented by the green area in Figure \ref{fig:toyproblem}, performs the best. In contrast, the inverse loss metric underperforms relative to other metrics, struggling to identify the correct clusters due to noisy sampling, where clients frequently sample incorrect peers, leading to catastrophic forgetting during model merging. Notably, cosine similarities on gradients and weights approach optimal performance with minimal standard deviation across runs. Figure \ref{fig:heat-toy} presents a heatmap of client communication, clearly indicating that the inverse loss and $L^2$ metrics exhibit more noise compared to cosine similarity metrics.

We also conducted an experiment to examine the effect of training sample size on the similarity metrics. As shown in Figure \ref{fig:toy_size}, we repeated the same experiment with an increased number of samples for each method. The results indicate that as the number of training samples increases, the performance gap for inverse loss and $L^2$ distance metrics narrows. However, these metrics never match the performance of cosine similarity on gradients and weights, which achieve a low test loss even with fewer training samples.

\begin{figure}[h]
    \centering
    \begin{subfigure}[b]{0.7\textwidth}
        \centering
        \includegraphics[width=\textwidth]{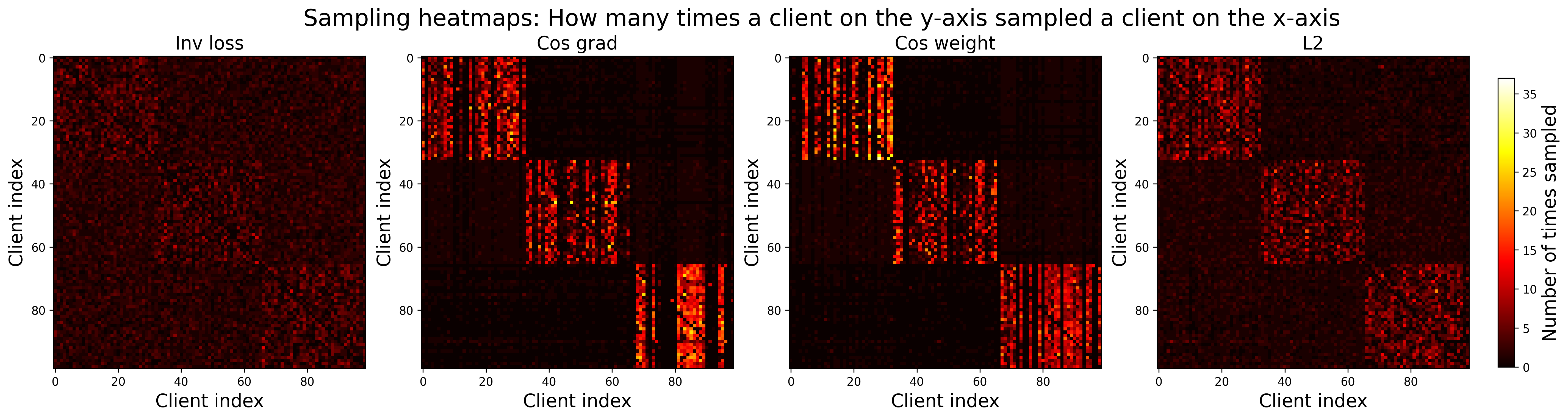}
        \caption{Synthetic linear regression problem.}
       \label{fig:heat-toy}
    \end{subfigure}
    \hfill
    \begin{subfigure}[b]{0.7\textwidth}
        \centering
        \includegraphics[width=\textwidth]{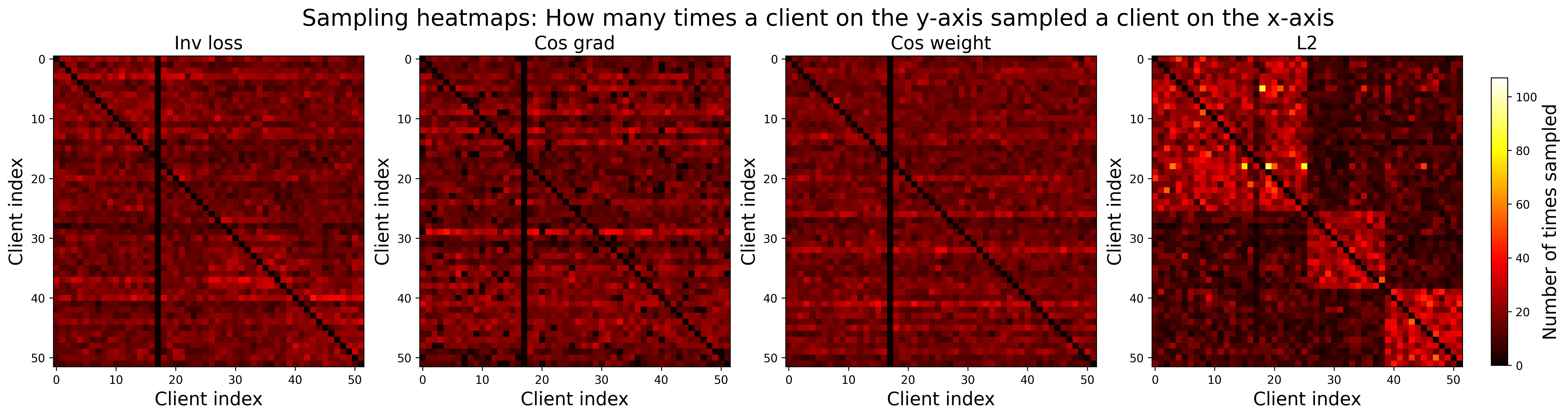}
        \caption{Pretrained CIFAR-100 label shift problem.}
        \label{fig:cifar100-heatmap}
    \end{subfigure}
    \caption{Heatmaps of client communication, indicating how often client $x$ communicated with client $y$ for the four different similarity metrics.}
\end{figure}

\subsection{Domain shift}
\textbf{FashionMNIST and MNIST.} We further investigate the impact of similarity metrics on domain shift in a nonconvex optimization problem using neural networks. In this scenario, we form two clusters: one consisting of clients with data from the MNIST dataset, and the other with data from the Fashion-MNIST dataset. Experiments were conducted using both a Multi-Layer Perceptron (MLP) and a Convolutional Neural Network (CNN) to assess the influence of model capacity on concept shift. 

As illustrated in Figure \ref{fig:concept_mlp}, there is a noticeable performance gap between random communication and the Oracle method (red and green, respectively). In this setup, FedSim negatively impacts all similarity metrics except for the $L^2$ distance. This suggests that when the similarity metrics effectively separate the clusters, FedSim may hinder learning by causing clients to underweight models from other clients within the same cluster. Similar results are observed with the CNN, as shown in Figure \ref{fig:concept_cnn}. Notably, inverse loss performs significantly worse than the other metrics in this context, indicating that it samples incorrect peers too often. Incorrect merging lowers performance of the local client models, which in turn makes empirical loss a worse similarity metric. In both the MLP and CNN settings, cosine similarity on gradients or weights outperforms the other metrics, exhibiting relatively low variance in test accuracies. This indicates that cosine similarity is a robust choice for decentralized learning in environments with distributional shifts, applicable to both model complexities.

\begin{figure}[h]
    \centering
    \begin{subfigure}[b]{0.35\textwidth}
        \centering
        \includegraphics[width=\textwidth]{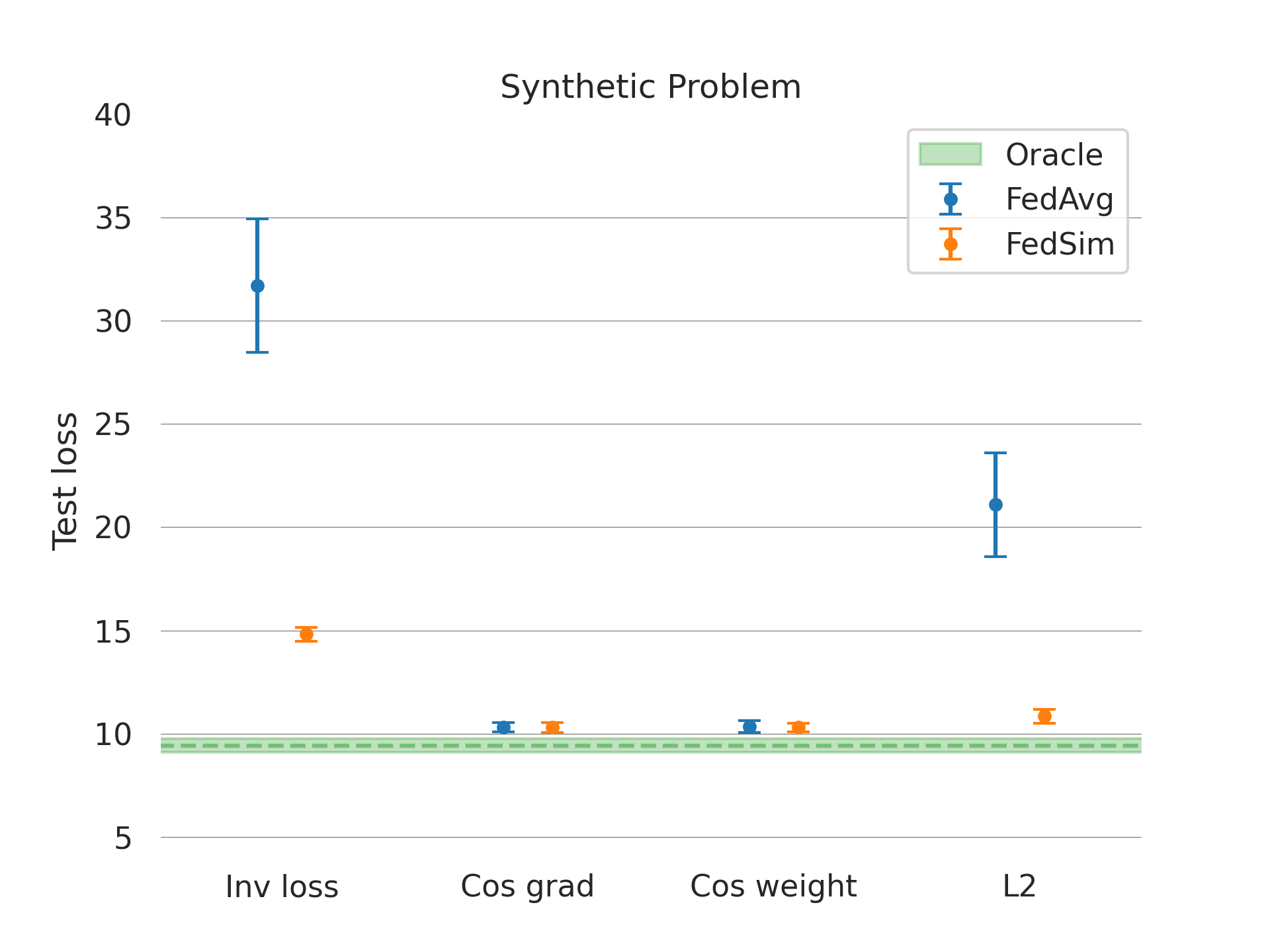}
        \caption{Average loss over clusters for each method, comparing FedAvg and FedSim.}
        \label{fig:toyproblem}
    \end{subfigure}
    \qquad
    \begin{subfigure}[b]{0.35\textwidth}
        \centering
        \includegraphics[width=\textwidth]{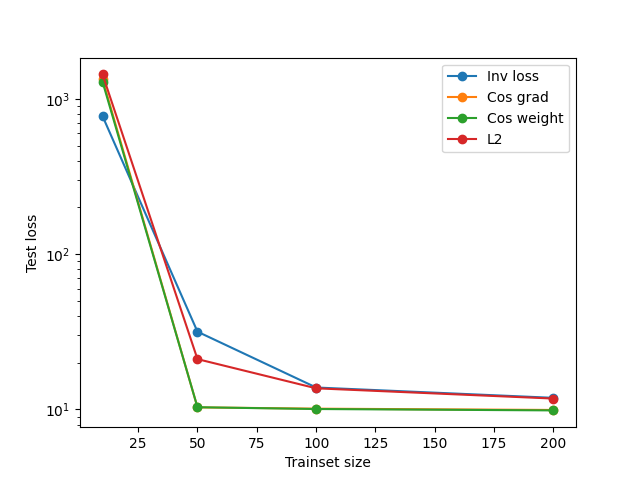}
        \caption{Average loss over clusters as a function of trainset size for all methods (FedAvg). The orange line depicting cos grad is hidden as it follows the green line.}
        \label{fig:toy_size}
    \end{subfigure}
    \caption{Test loss performance for all methods on the linear regression problem with concept shift.}
\end{figure}

\begin{figure}[h]
    \centering
    \begin{subfigure}[b]{0.35\textwidth}
        \centering
        \includegraphics[width=\textwidth]{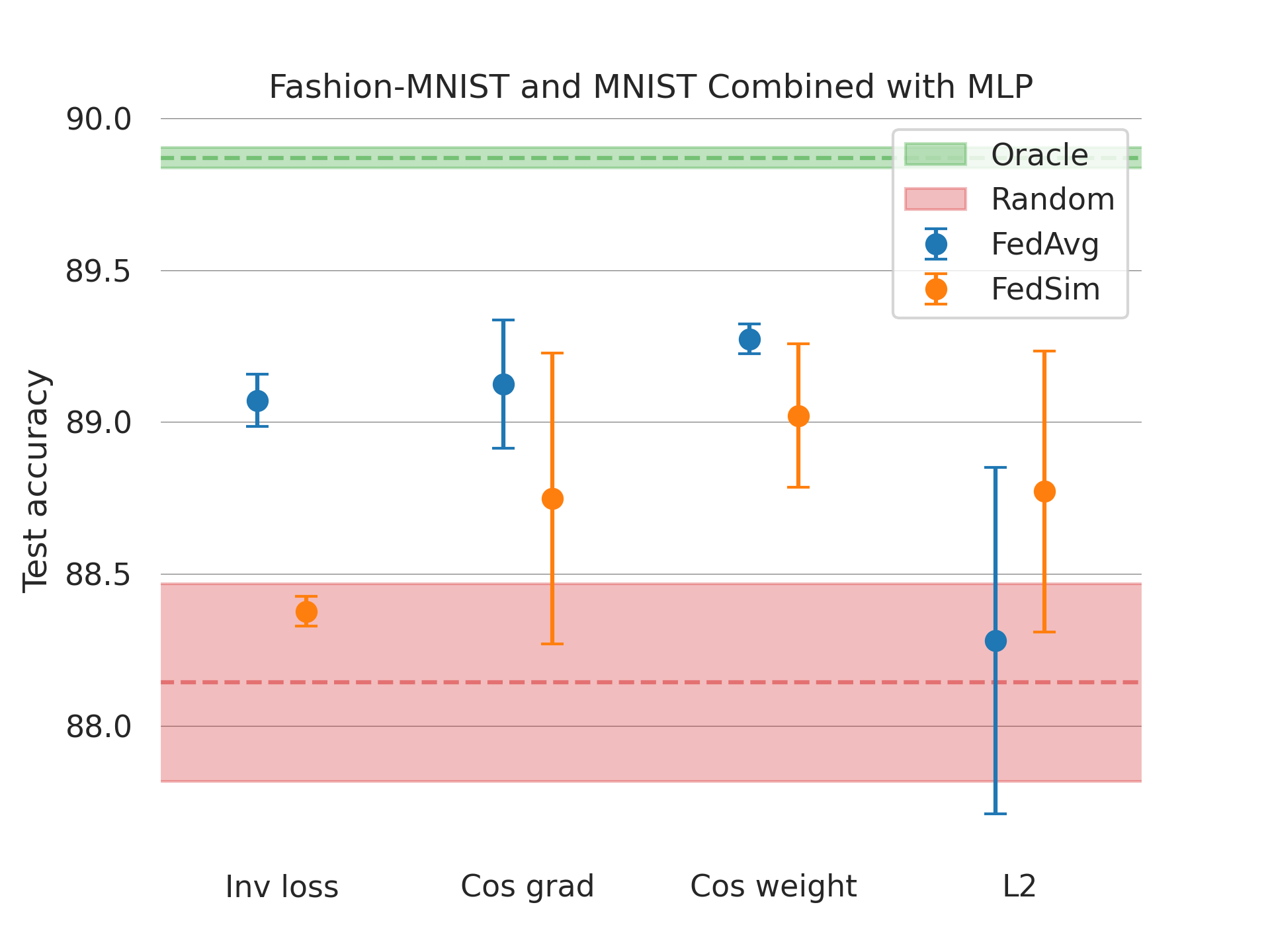}
        \caption{MLP.}
        \label{fig:concept_mlp}
    \end{subfigure}
    \qquad
    \begin{subfigure}[b]{0.35\textwidth}
        \centering
        \includegraphics[width=\textwidth]{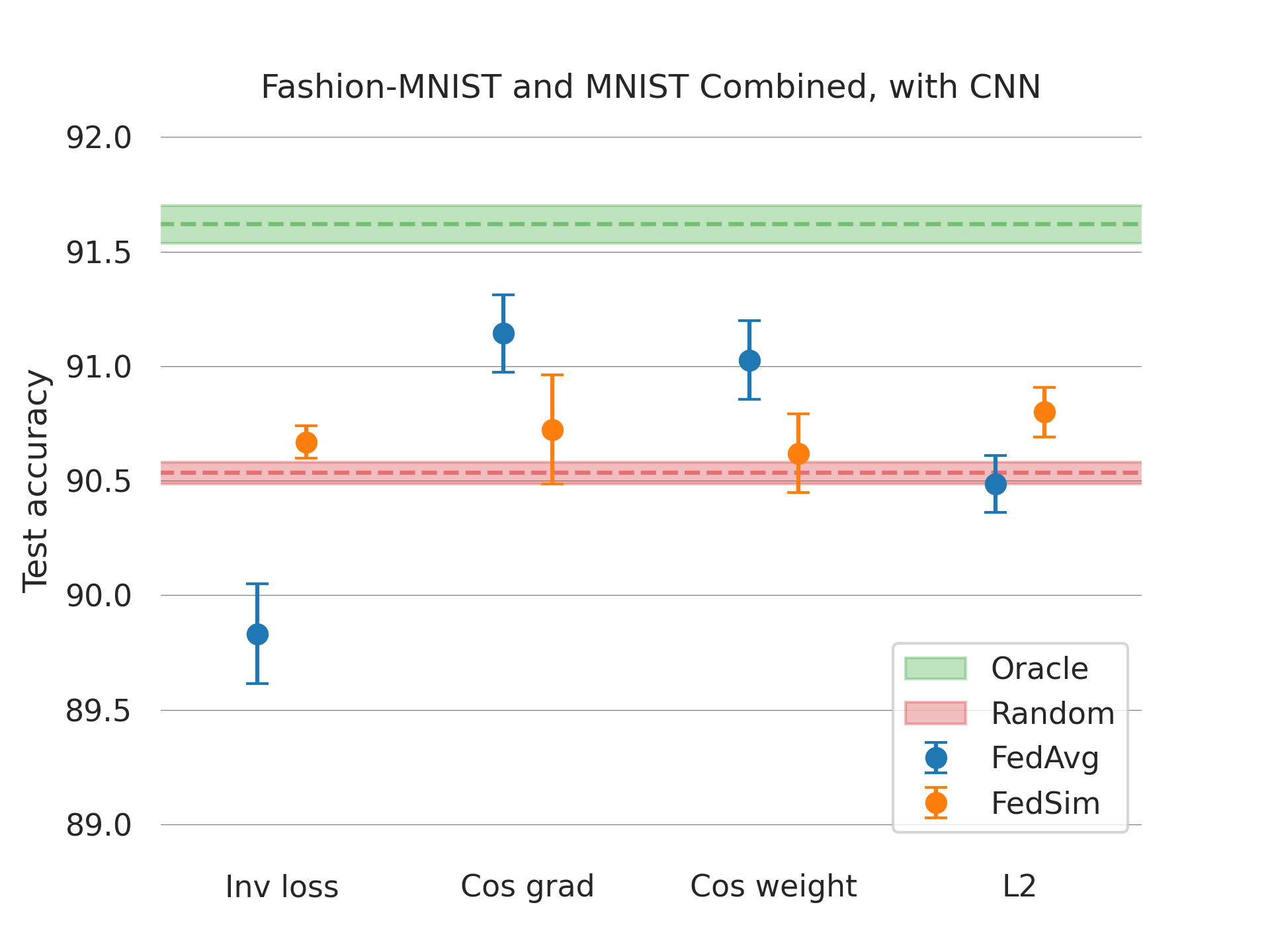}
        \caption{CNN.}
        \label{fig:concept_cnn}
    \end{subfigure}
    \caption{Test accuracy for all methods on the domain shift problem, where half of clients have data from the MNIST dataset and the other half have data from the Fashion-MNIST dataset. Results for two different model architectures: (a) an MLP and (b) a CNN.}
\end{figure}

\subsection{Covariate shift}
\textbf{Fashion-MNIST.} For the covariate shift experiments, we observe that all similarity metrics outperform random communication, except for cosine similarity on gradients when using FedSim. This is illustrated in Figure \ref{fig:fmnist-cov} and Table \ref{tab:main}. FedAvg consistently demonstrates more stable performance across different metrics compared to FedSim, suggesting that the similarity metrics effectively separate the clusters. Among the metrics, inverse loss, cosine similarity on gradients, and cosine similarity on weights exhibit relatively comparable performance. However, the $L^2$ distance metric underperforms relative to the other metrics, despite still achieving better results than random communication. These findings underscore the effectiveness of using FedAvg with appropriate similarity metrics to handle covariate shifts in decentralized learning.

\begin{figure}[h]
    \centering
    \begin{subfigure}[b]{0.35\textwidth}
        \centering
        \includegraphics[width=\textwidth]{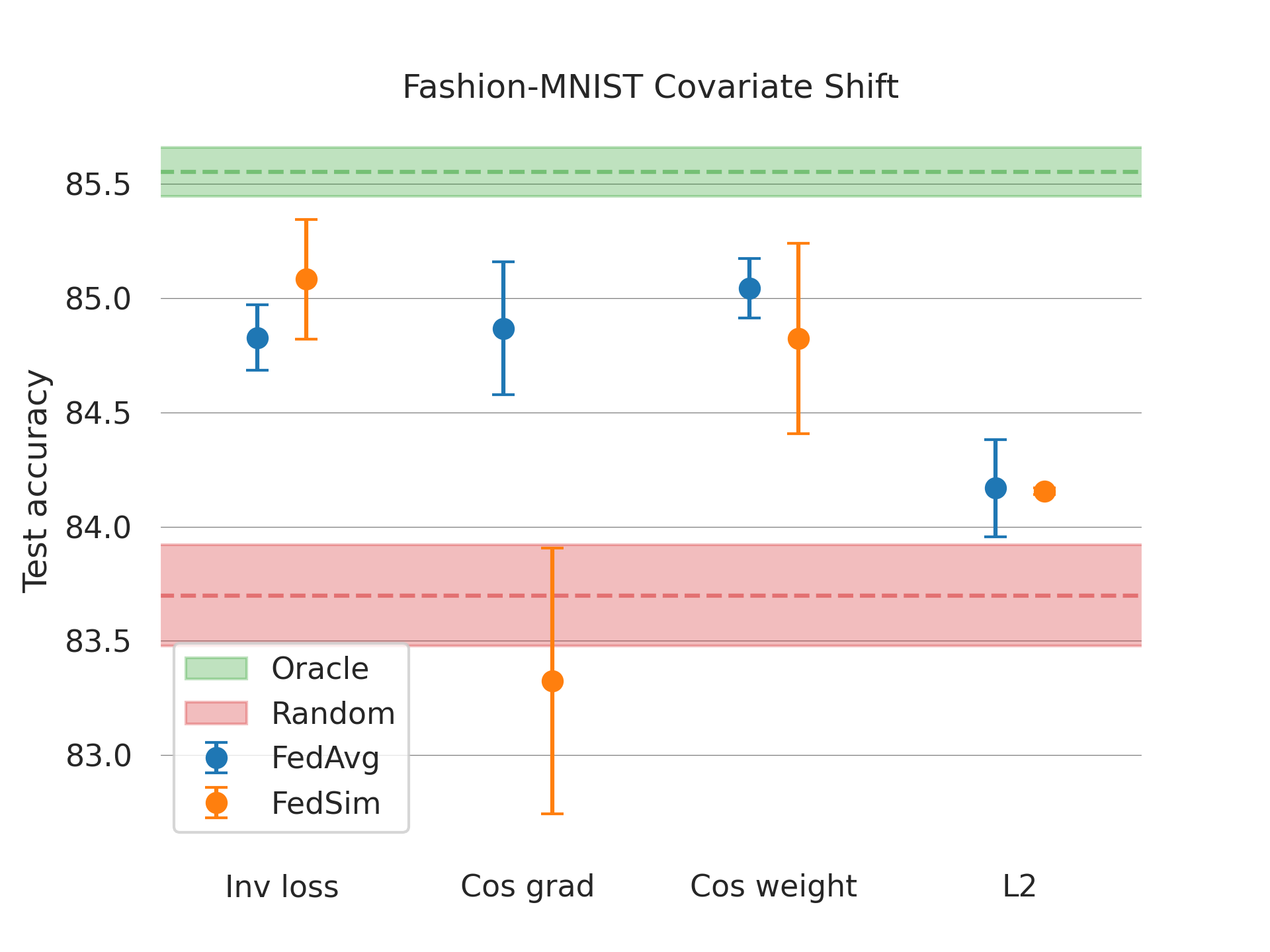}
        \caption{Fashion-MNIST and covariate shift.}
        \label{fig:fmnist-cov}
    \end{subfigure}
    \qquad
    \begin{subfigure}[b]{0.35\textwidth}
        \centering
        \includegraphics[width=\textwidth]{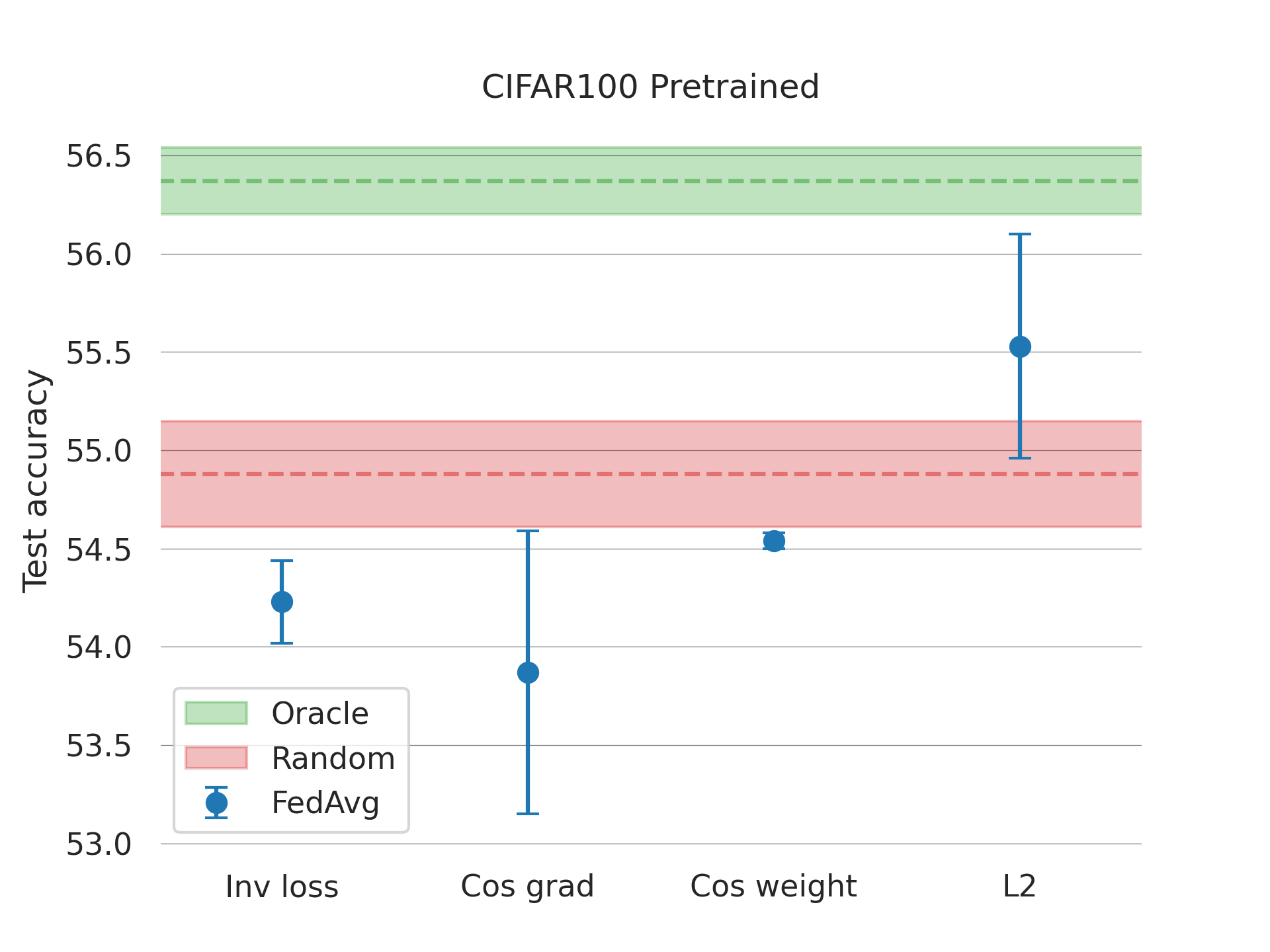}
        \caption{CIFAR-100 and label shift, 3 clusters.}
        \label{fig:cifar-100}
    \end{subfigure}
        \begin{subfigure}[b]{0.35\textwidth}
        \centering
        \includegraphics[width=\textwidth]{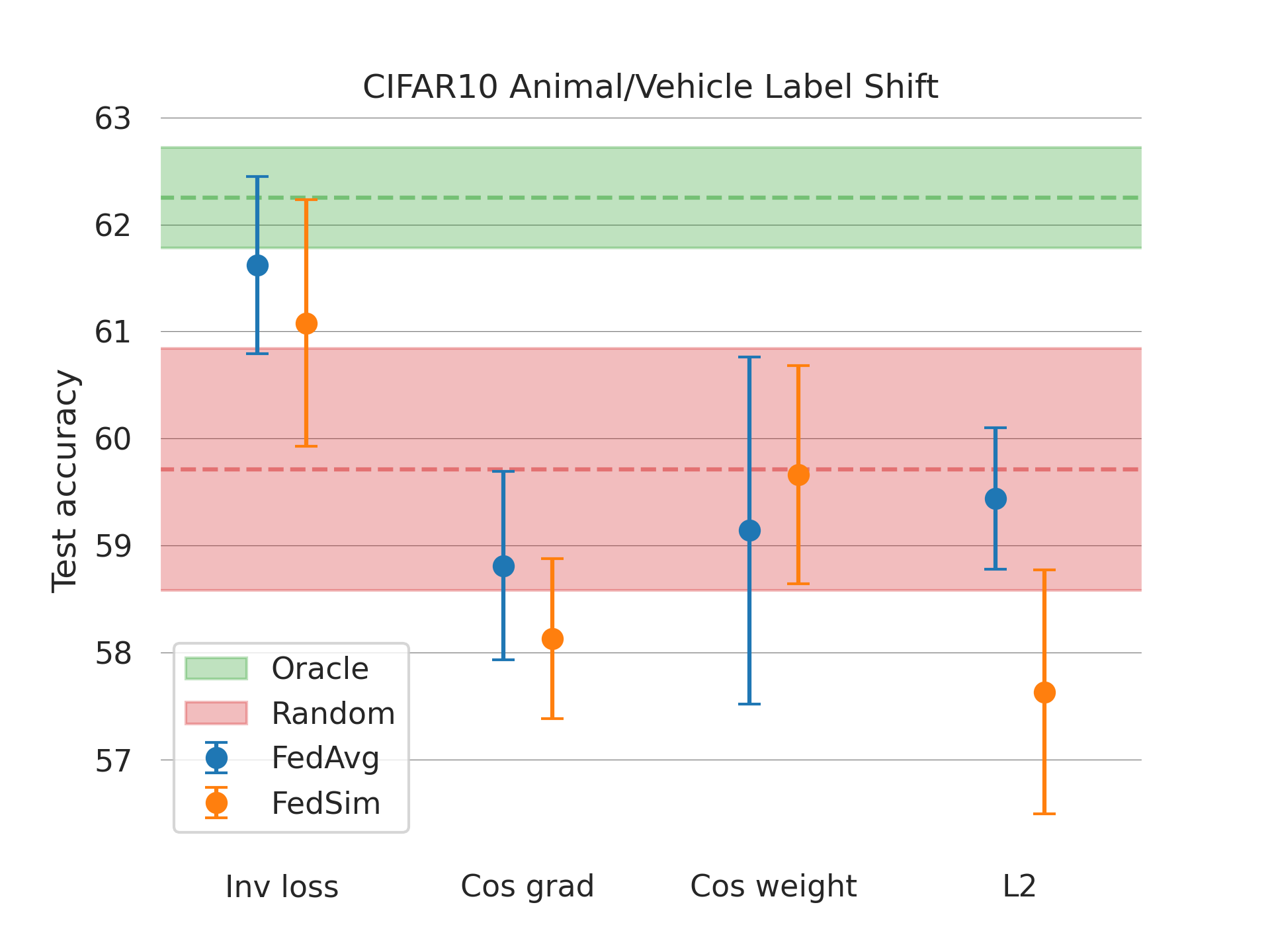}
        \caption{CIFAR-10 and label shift, 2 clusters.}
        \label{fig:cifar-10-2}
    \end{subfigure}
    \qquad
    \begin{subfigure}[b]{0.35\textwidth}
        \centering
        \includegraphics[width=\textwidth]{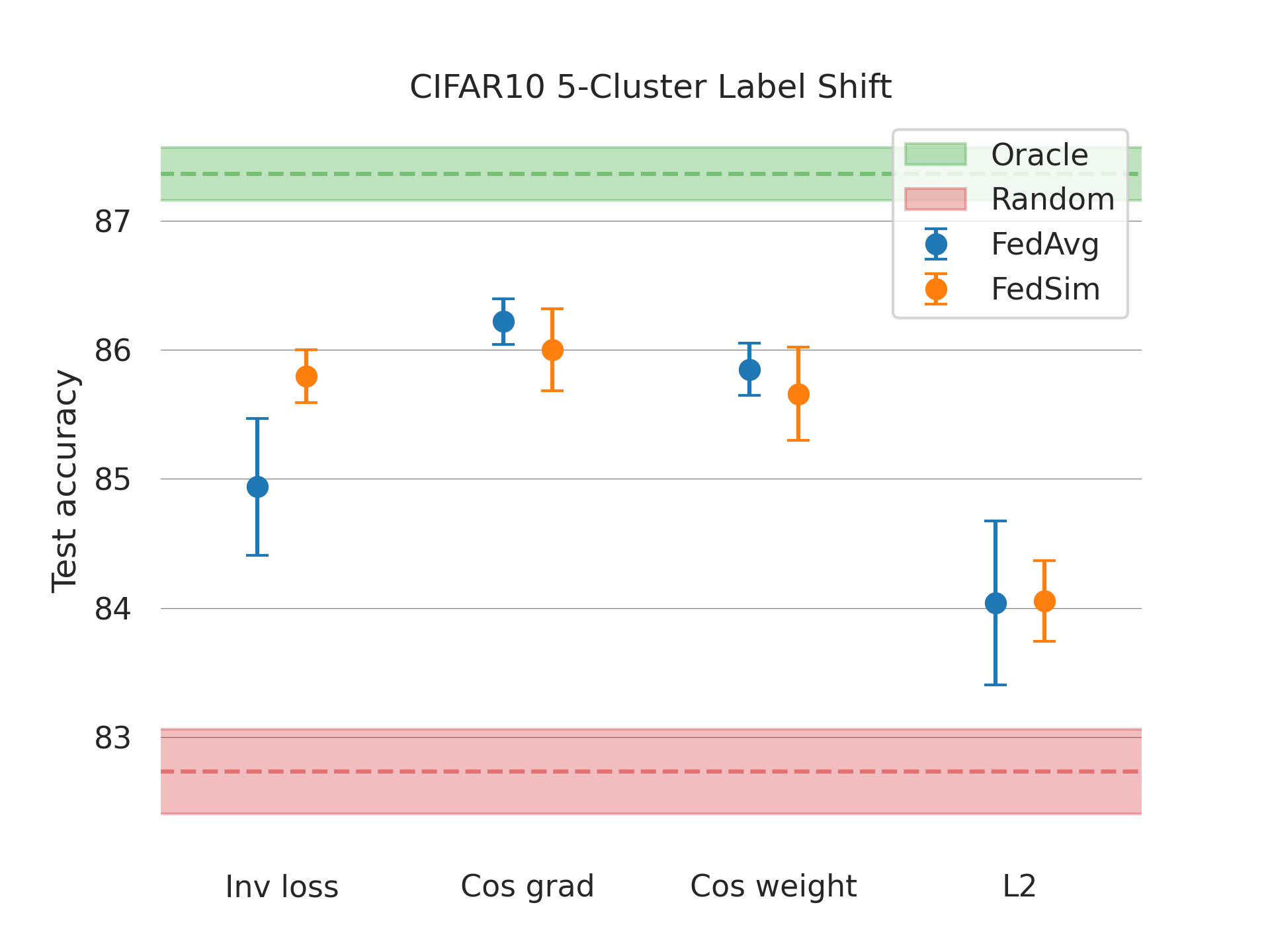}
        \caption{CIFAR-10 and label shift, 5 clusters}
        \label{fig:cifar-10-5}
    \end{subfigure}
    \caption{Test accuracy for all methods on different problems: Fashion-MNIST covariate shift (a), pre-trained CIFAR-100 label shift (b), and the CIFAR-10 label shifts (c,d).}
\end{figure}

\subsection{Label shift}
\textbf{CIFAR-10, 2 clusters.} In the CIFAR-10 experiment with an animal cluster and a vehicle cluster, most metrics do not exceed the performance of random communication, presented in Figure \ref{fig:cifar-10-2} and Table \ref{tab:main}. Additionally, the gap between random communication and the Oracle method is small, suggesting that this specific label shift relatively straightforward for random communication to manage. Nevertheless, we observe a performance increase for the inverse loss metric in this setting as it outperforms random communication and the other metrics.

\textbf{CIFAR-10, 5 clusters}. In the scenario with five clusters, where each cluster is defined by its two unique labels, we observe a larger performance gain across all metrics. This is presented in Figure \ref{fig:cifar-10-5} and Table \ref{tab:main}. Here, the label shift is more pronounced as $P(y)$ differs more significantly between the clusters. This means that merging models with wrong peers has a larger negative impact on performance. This is illustrated by the fact that random communication achieves even worse performance compared to the Oracle method in this setup, as compared to the previous setup of two clusters. The cosine similarity metrics, however, demonstrate robust performance with low variance across runs. In this setting, inverse loss does not match the performance of the cosine similarity metrics. Meanwhile, using FedSim improves the performance of inverse loss, indicating that inverse loss is able to identify clusters, albeit noisily. The $L^2$ metric performs worse than all other metrics but still manages to outperform random communication.

\textbf{CIFAR-100, 3 clusters, pre-trained ResNet-18.} Previous research in federated learning has shown that starting with a pre-trained model can mitigate some of the adverse impacts of label shift \citep{nguyen2022begin}. In our experiments, we observed that this approach makes the Random method relatively stronger. As depicted in Figure \ref{fig:cifar-100}, three out of the four similarity metrics did not outperform the Random method when using a pre-trained model. For the inverse loss metric, the pre-trained model's robustness results in uniformly low loss across all clients, causing all clients to appear very similar. In the case of cosine similarity on gradients and weights, the similarity scores between clients were also too uniform, making it challenging to detect significant cluster structures. Notably, the $L^2$ metric was the only method capable of accurately clustering clients and outperforming random communication in this context. This is further illustrated in Figure \ref{fig:cifar100-heatmap}, which presents communication heatmaps and highlights the difficulties these methods encounter in correctly identifying clusters. In these experiments, we did not evaluate FedSim with pre-trained models due to the substantial computational resources required for training large-scale peer-to-peer systems with large models. Future work will address this limitation by exploring both FedSim and other aggregation methods for decentralized learning. 

\section{Conclusions}
We have conducted a thorough empirical study on the effect of similarity metrics in decentralized learning for client identification. Our findings highlight the strengths and weaknesses of various similarity metrics, which are dependent on the problem type and the nature of the distributional shift. Based on our results, we recommend that researchers and practitioners carefully explore similarity metrics in decentralized learning to identify the most suitable one for their specific problem. Our study indicates that the effectiveness of similarity metrics varies depending on the specific conditions and types of distributional shifts encountered. Our key conclusions are as follow:

\begin{enumerate}[i]
    \item \textbf{The $L^2$ distance metric:} The $L^2$ metric, although outperforming random communication most of the time, is the weakest of the similarity metrics we studied. We hypothesize that this is due to the metric being sensitive to the scaling of parameters and highly susceptible to permutations of neurons or layers, making it less reliable when comparing models with different weight configurations but similar functional behavior.
    \item \textbf{Inverse empirical loss:} The inverse empirical loss provides a direct functional measure of similarity. However, it is highly dependent on the specific training data used, with its reliability being contingent on the number of samples and their quality. Clients with inherently more difficult tasks or poorer initial models can have higher loss values, which can skew the inverse loss and mislead the merging process. If the sample size is large, and the local models are able to solve the local problem well, this metric is effective at identify beneficial peers for model merging.
    \item \textbf{Cosine similarity:} Our results indicate that cosine similarity on model parameters often outperforms other metrics, particularly the $L^2$ method. This is likely due to the fact that cosine similarity is less sensitive to scaling compared to $L^2$ distance and captures the similarity in the direction of weights rather than their exact values, making it more robust to permutations. Similarly, cosine similarity on gradients provides valuable insights into the learning dynamics between models, beyond their static states. Our findings suggest that cosine similarity metrics are relatively robust across various scenarios.
    \item \textbf{Concept shift:} Clustering of clients in decentralized learning is especially useful in the presence of strong concept shift, where $P(y|x)$ varies across clusters while $P(x)$ remains constant. In these cases, random communication cannot solve the optimization problem, necessitating accurate cluster identification.
    \item \textbf{Random communication:} Despite various distributional shifts our results demonstrate that with proper and robust model selection, random communication remains a strong baseline.
    \item \textbf{Pre-training:} Pre-training can enhance the performance of random communication in scenarios with label shift. However, the use of a robust pre-trained model tends to standardize similarity scores, making it challenging for various metrics to effectively identify distinct clusters.
    \item \textbf{Federated Similarity Averaging (FedSim):} We introduced a new aggregation scheme, FedSim. While it is not a silver bullet, FedSim can enhance the performance of FedAvg, particularly in scenarios where averaging with incorrect models incurs high costs, such as in the presence of strong concept shift between clusters (Figure \ref{fig:toyproblem}).
\end{enumerate}

Overall, our study underscores the importance of selecting appropriate similarity metrics and aggregation strategies based on the specific context and challenges of decentralized learning tasks.

\section{Future work}
There are several promising directions for future research, both theoretical and empirical. Developing theoretical frameworks for measuring similarity between private datasets could provide better insights into the optimal selection of similarity metrics.

The research area of decentralized learning on non-iid data is closely related to domain generalization and transfer learning. These fields involve understanding when it is possible to jointly train deep learning models across different probability distributions and determining the extent to which these models can extrapolate to new data. Bridging the gap between these areas of research is an interesting direction.

Additionally, investigating the intersection of similarity metrics and privacy is a compelling direction. Beyond model-based similarity metrics like inverse empirical loss, Euclidean distance, and cosine similarity -- which assess similarity by comparing clients' model weights -- \emph{optimal transport} theory offers a framework for measuring similarity based on clients' data distributions~\citep{peyre2019computational}. The \emph{Wasserstein distance}, a fundamental concept in optimal transport, quantifies the minimal cost of transforming one probability distribution into another. While the Wasserstein distance provides a nuanced and theoretically sound measure of data distribution similarity, its practical application is hindered by computational complexity. Computing the exact Wasserstein distance for high-dimensional empirical distributions involves solving optimization problems with significant resource demands. Moreover, using the Wasserstein distance in decentralized learning introduces considerable privacy concerns. Since this metric relies on detailed statistical information or even direct access to empirical data distributions, it risks exposing sensitive information, thereby conflicting with the privacy-preserving goals of decentralized systems. Future work should focus on addressing these computational and privacy challenges associated with advanced similarity metrics like the Wasserstein distance. Developing efficient approximations or privacy-enhancing techniques tailored to these metrics could significantly enhance their utility in decentralized learning, fostering more robust, generalizable, and privacy-conscious machine learning models.


\bibliography{main}
\bibliographystyle{tmlr}
\newpage
\appendix
\section{Appendix}
\label{appendixA}

\subsection{Hyperparameters and experimental details}
In our study, we aimed for a rigorous and fair evaluation where we performed a grid search over hyperparameters to optimize them for each experimental setup. We ensured that no selected hyperparameter was at the boundary of the grid. If this occurred, we expanded the grid size accordingly. All tuning was carried out using local validation sets for each client. Each experiment was independently run three times (fifteen times for the synthetic problem) with the optimized hyperparameters.

Table \ref{tab:hparam} presents the selected hyperparameters and other experimental details. Figure \ref{fig:app-tau1} illustrates the impact of $\tau$ in \eqref{eq:softmax} on performance. Calibrating $\tau$ appropriately is crucial for the effectiveness of similarity-based clustering, and the optimal value of $\tau$ varies across different problems and methods.

\subsection{Additional experimental results}
In this section we present additional results. In Table \ref{tab:toy}, the results for the synthetic linear regression problem is presented for each cluster. Table \ref{tab:overall-comparison} shows the accuracies for each cluster in the domain shift problem. \ref{tab:fmnist-cov} presents the results for all clusters in the covariate shift experiment with Fashion-MNIST. Table \ref{tab:cifar-10-2} and \ref{tab:cifar-10-5} summarizes the results on the label shift experiments for CIFAR-10.

In Figure \ref{fig:n_sampled} we observe how the number of sampled clients affect test accuracy in the two-cluster (animal/vehicles) CIFAR-10 experiment. In this experiment we have 200 training samples in each client, and a total of 100 clients. Sampling a lot of clients start give diminishing returns, but increases computational cost -- especially for inverse loss that has to perform forward passes on all sampled clients. Therefore it is important to tune this hyperparameter to ensure a good trade-off between performance and computation.

\begin{figure}[H]
    \centering
    \includegraphics[width=0.5\linewidth]{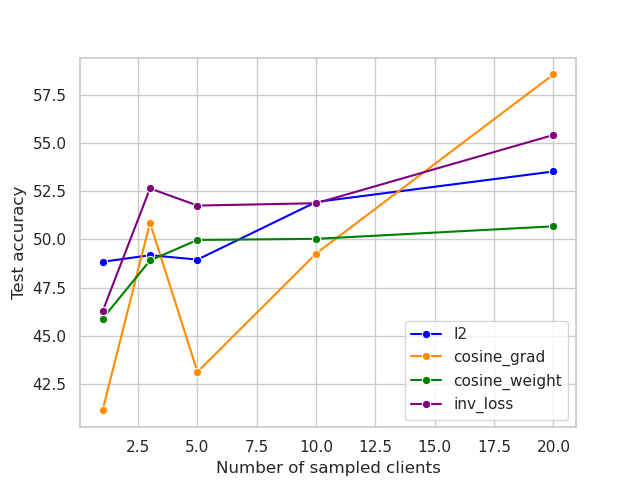}
    \caption{Test accuracy vs number of sampled clients for all similarity metrics.}
    \label{fig:n_sampled}
\end{figure}

In Figure \ref{fig:app-heatmaps}, heatmaps over client communication for the CIFAR-10 and Fashion-MNIST are presented.

\begin{figure}[h]
    \centering
    \begin{subfigure}[b]{0.7\textwidth}
        \centering
        \includegraphics[width=\textwidth]{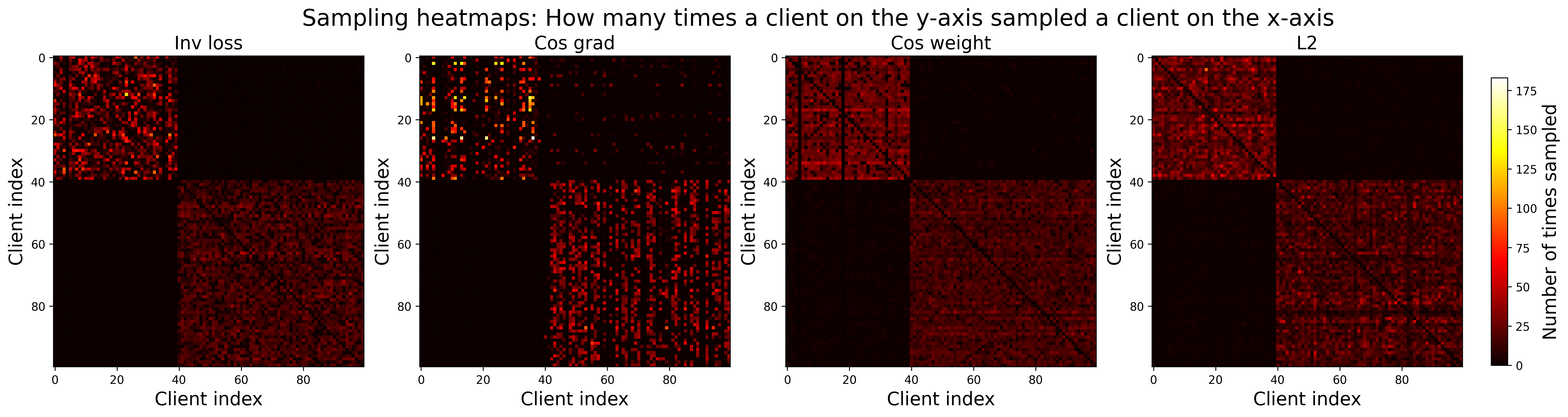}
        \caption{CIFAR-10, 2 clusters, label shift.}
       \label{fig:heat-cifar10-2}
    \end{subfigure}
    \hfill
    \begin{subfigure}[b]{0.7\textwidth}
        \centering
        \includegraphics[width=\textwidth]{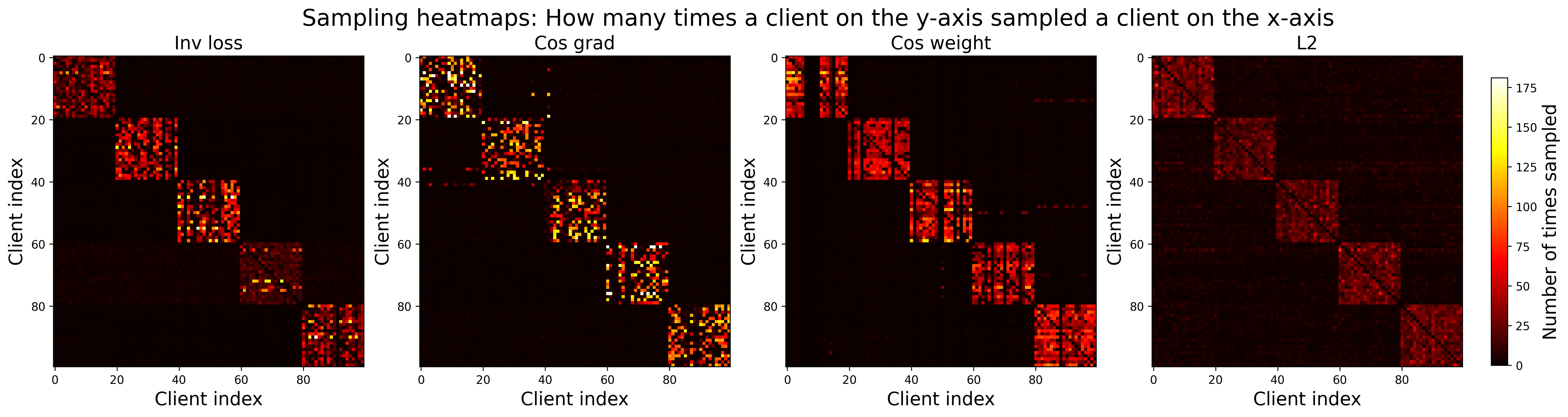}
        \caption{CIFAR-10, 5 clusters, label shift.}
        \label{fig:cifar5-heatmap}
    \end{subfigure}
    \hfill
    \begin{subfigure}[b]{0.7\textwidth}
        \centering
        \includegraphics[width=\textwidth]{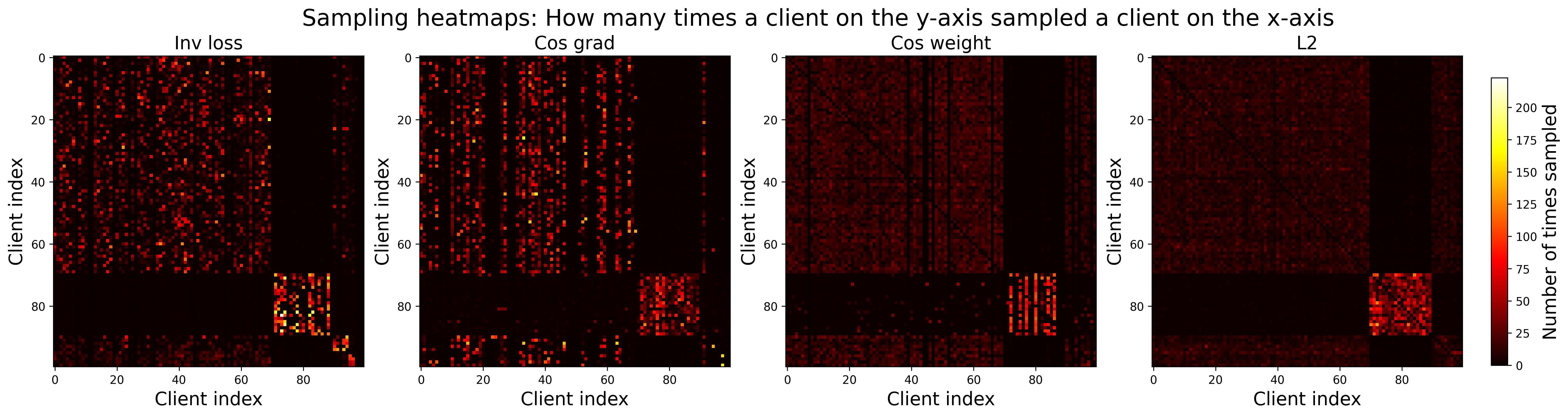}
        \caption{Fashion-MNIST, 4 clusters, covariate shift.}
        \label{fig:heat-fmnist}
    \end{subfigure}
    \begin{subfigure}[b]{0.7\textwidth}
        \centering
        \includegraphics[width=\textwidth]{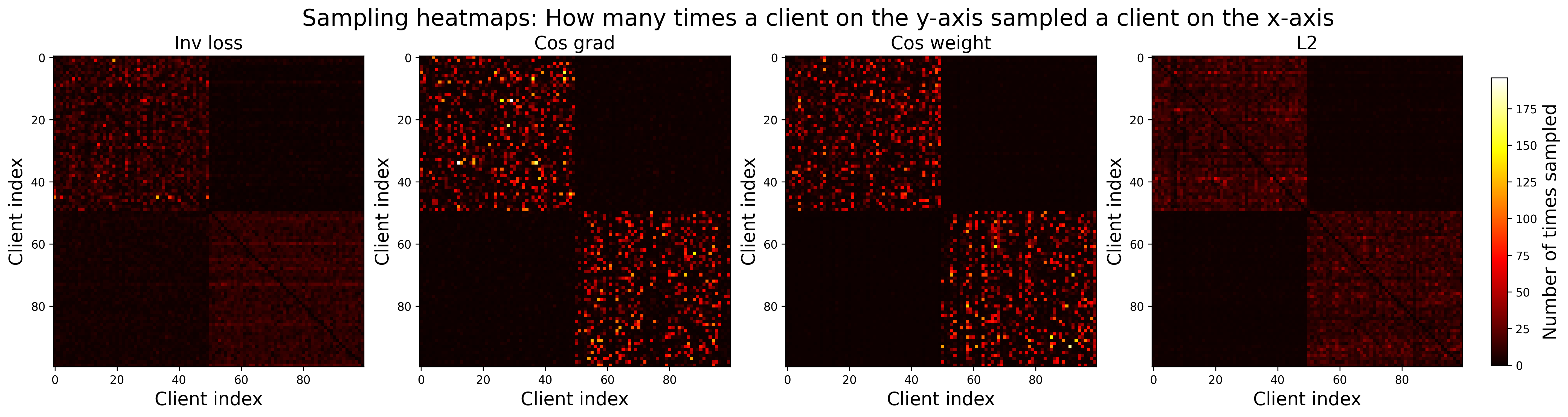}
        \caption{F/MNIST, 2 clusters, domain shift.}
        \label{fig:heat-domain_shift}
    \end{subfigure}
    \caption{Heatmaps of client communication, indicating how often client $x$ communicated with client $y$ for the four different similarity metrics.}
    \label{fig:app-heatmaps}
\end{figure}

\begin{table*}[h!]
\centering
\begin{tabular}{c|cc|cc|cc|cc}
\hline 
\multirow{2}{*}{Baselines} & \multicolumn{2}{c|}{Cluster 1} & \multicolumn{2}{c|}{Cluster 2}  & \multicolumn{2}{c|}{Cluster 3}& \multicolumn{2}{c}{Mean} \\
\cline{2-9}
& FedAvg & FedSim & FedAvg & FedSim & FedAvg & FedSim & FedAvg & FedSim \\
\hline
Random & $1196.04$ & - & $1251.50$ & - & $2036.90$ & - & $1494.84$ & -\\
Oracle & $9.54$ & - & $9.29$ & - & $9.46$ & - & $9.43$ & - \\
Inv loss &  $32.91$ & $14.87$ & $30.40$ & $14.58$ & $31.77$ & $15.01$ & $31.69$ & $14.82$ \\
Cos grad & $10.19$ & $10.40$ & $10.32$ & $10.22$ & $10.44$ & $10.29$ & $10.32$ & $\textbf{10.30}$ \\
Cos weight & $10.29$ & $10.37$ & $10.25$ & $10.41$ & $10.48$ & $10.13$ & $10.34$ & $\textbf{10.30}$ \\
$L^2$ & $21.97$ & $10.85$ & $19.77$ & $10.92$ & $21.56$ & $10.78$ & $21.10$ & $10.85$\\
Local & $28.76$ & - & $26.80$ & - & $35.22$ & - & $30.26$ & - \\
\hline
\end{tabular}
\caption{Test loss for each method on the synthetic linear regression problem.}
\label{tab:toy}
\end{table*}

\begin{table}[h!]
\centering
\begin{subtable}{\textwidth}
\centering
\begin{tabular}{c|cc|cc|cc}
\hline 
\multirow{2}{*}{Baselines} & \multicolumn{2}{c|}{Cluster 1} & \multicolumn{2}{c|}{Cluster 2} & \multicolumn{2}{c}{Mean} \\
\cline{2-7}
& FedAvg & FedSim & FedAvg & FedSim & FedAvg & FedSim \\
\hline
Random & $97.01$ & - & $84.05$ & - & $90.53$ & - \\
Oracle & $97.68$ & - & $85.56$ & - & $91.62$ & - \\
Inv loss & $97.11$ & $96.96$ & $82.55$ & $84.38$ & $89.83$ & $90.67$ \\
Cos grad & $97.29$ & $96.91$ & $84.99$ & $84.54$ & $91.14$ & $90.72$ \\
Cos weight & $97.22$ & $96.72$ & $84.83$ & $84.52$ & $91.03$ & $90.62$ \\
$L^2$ & $97.05$ & $97.04$ & $83.92$ & $84.56$ & $90.49$ & $90.80$ \\
Local & $89.29$ & - & $75.15$ & - & $82.22$ & - \\
\hline
\end{tabular}
\caption{CNN}
\label{tab:fmnist-concept-cnn}
\end{subtable}
\vspace{1em} 
\begin{subtable}{\textwidth}
\centering
\begin{tabular}{c|cc|cc|cc}
\hline 
\multirow{2}{*}{Baselines} & \multicolumn{2}{c|}{Cluster 1} & \multicolumn{2}{c|}{Cluster 2} & \multicolumn{2}{c}{Mean} \\
\cline{2-7}
& FedAvg & FedSim & FedAvg & FedSim & FedAvg & FedSim \\
\hline
Random & 93.66 & - & 82.63 & - & 88.14 & - \\
Oracle & 95.38 & - & 84.36 & - & 89.87 & - \\
Inv loss & 94.19 & 94.73 & 83.96 & 82.02 & 89.07 & 88.38 \\
Cos grad & 94.79 & 94.54 & 83.46 & 82.96 & 89.13 & 88.75 \\
Cos weight & 95.15 & 95.18 & 83.39 & 82.86 & 89.27 & 89.02 \\
$L^2$ & 94.83 & 95.32 & 81.73 & 82.22 & 88.28 & 88.77 \\
Local & 86.07 & - & 76.34 & - & 81.20 & - \\
\hline
\end{tabular}
\caption{MLP}
\label{tab:fmnist-concept-mlp}
\end{subtable}
\caption{Test accuracy comparison of methods under domain shift with two clusters (MNIST and Fashion-MNIST).}
\label{tab:overall-comparison}
\end{table}

\begin{table*}[h!]
\footnotesize
\centering
\begin{tabular}{c|cc|cc|cc|cc|cc}
\hline 
\multirow{2}{*}{Baselines} & \multicolumn{2}{c|}{Cluster 1} & \multicolumn{2}{c|}{Cluster 2} & \multicolumn{2}{c|}{Cluster 3} & \multicolumn{2}{c|}{Cluster 4} &  \multicolumn{2}{c}{Mean} \\
\cline{2-11}
& FedAvg & FedSim & FedAvg & FedSim & FedAvg & FedSim & FedAvg & FedSim & FedAvg & FedSim \\
\hline
Random & $85.11$ & - & $79.53$ & - & $82.36$ & - & $81.89$ & - & $83.70$ & -   \\
Oracle  & $86.11$ & - & $85.19$ & - & $82.15$ & - & $82.60$ & - & $85.55$ & - \\
Inv loss  & $85.49$ & $85.51$ & $83.73$ & $84.72$ & $82.44$ & $82.64$ & $82.31$ & $83.02$ & $84.83$ & $\textbf{85.08}$ \\
Cos grad & $85.61$ & $85.32$ & $83.68$ & $77.02$ & $82.57$ & $82.11$ & $81.59$ & $81.85$ & $84.87$ & $83.32$  \\
Cos weight & $85.81$ & $85.24$ & $83.59$ & $84.67$ & $82.74$ & $82.34$ & $82.35$ & $82.13$ & $\textbf{85.04}$ & $84.82$ \\
$L^2$  & $85.54$ & $85.37$ & $80.17$ & $80.87$ & $82.66$ & $82.21$ & $82.43$ & $82.26$ & $84.17$ & $84.16$ \\
Local  & $76.36$ & - & $75.86$ & - & $76.23$ & - & $76.47$ & - & $76.26$ & - \\
\hline
\end{tabular}
\caption{Comparison of baselines across clusters and methods. Fashion-MNIST rotation}
\label{tab:fmnist-cov}
\end{table*}

\begin{table}[h!]
\centering
\begin{tabular}{c|cc|cc|cc}
\hline 
\multirow{2}{*}{Baselines} & \multicolumn{2}{c|}{Cluster 1} & \multicolumn{2}{c|}{Cluster 2} & \multicolumn{2}{c}{Mean} \\
\cline{2-7}
& FedAvg & FedSim & FedAvg & FedSim & FedAvg & FedSim \\
\hline
Random & $52.88$ & - & $69.94$ & - & $59.71$ & - \\
Oracle & $54.39$ & - & $74.05$ & - & $62.25$ & - \\
Inv loss & $53.97$ & $53.77$ & $73.10$ & $72.05$ & $\textbf{61.62}$ & $61.08$ \\
Cos grad & $52.28$ & $51.06$ & $68.62$ & $68.73$ & $58.81$ & $58.13$ \\
Cos weight & $51.96$ & $52.02$ & $69.90$ & $71.11$ & $59.14$ & $59.66$ \\
$L^2$ & $52.14$ & $50.62$ & $70.39$ & $68.15$ & $59.44$ & $57.63$ \\
Local & $30.14$ & - & $49.21$ & - & $37.77$ & - \\
\hline
\end{tabular}
\caption{Test accuracy comparison of baselines across clusters and methods. Cifar-10, 2 clusters.}
\label{tab:cifar-10-2}
\end{table}

\begin{table}[h!]
\scriptsize
\centering
\begin{tabular}{c|cc|cc|cc|cc|cc|cc}
\hline 
\multirow{2}{*}{Baselines} & \multicolumn{2}{c|}{Cluster 1} & \multicolumn{2}{c|}{Cluster 2} & \multicolumn{2}{c|}{Cluster 3} & \multicolumn{2}{c|}{Cluster 4} & \multicolumn{2}{c|}{Cluster 5} & \multicolumn{2}{c}{Mean} \\
\cline{2-13}
& FedAvg & FedSim & FedAvg & FedSim & FedAvg & FedSim & FedAvg & FedSim & FedAvg & FedSim & FedAvg & FedSim  \\
\hline
Random & $87.34$ & - & $73.99$ & - & $79.13$ & - & $88.76$ & - & $84.46$ & - & $82.73$ & - \\
Oracle & $91.66$ & - & $78.64$ & - & $83.25$ & - & $92.2$ & - & $91.06$ & - & $87.36$ & - \\
Inv loss & $88.1$ & $90.86$ & $77.83$ & $76.77$ & $82.14$ & $81.36$ & $88.37$ & $91.23$ & $88.26$ & $88.77$ & $84.94$ & $85.8$ \\
Cos grad & $91.3$ & $90.86$ & $77.21$ & $77.06$ & $81.91$ & $82.38$ & $91.44$ & $91.11$ & $89.23$ & $88.59$ & $\textbf{86.22}$ & $86.0$ \\ 
Cos weight & $91.01$ & $91.23$ & $77.46$ & $76.51$ & $81.16$ & $81.45$ & $91.42$ & $91.5$ & $88.19$ & $87.6$ & $85.85$ & $85.66$\\
$L^2$ & $89.29$ & $89.21$ & $74.79$ & $75.3$ & $79.85$ & $79.67$ & $89.93$ & $90.02$ & $86.35$ & $86.07$ & $84.04$ & $84.06$\\
Local & $82.31$ & - & $66.93$ & - & $74.07$ & - & $83.84$ & - & $78.25$ & - & $77.08$ & - \\
\hline
\end{tabular}
\caption{Test accuracy comparison of baselines across clusters and methods. Cifar-10, 5 clusters.}
\label{tab:cifar-10-5}
\end{table}

\begin{figure}[h]
    \centering
    \begin{subfigure}[b]{0.45\textwidth}
        \centering
        \includegraphics[width=\textwidth]{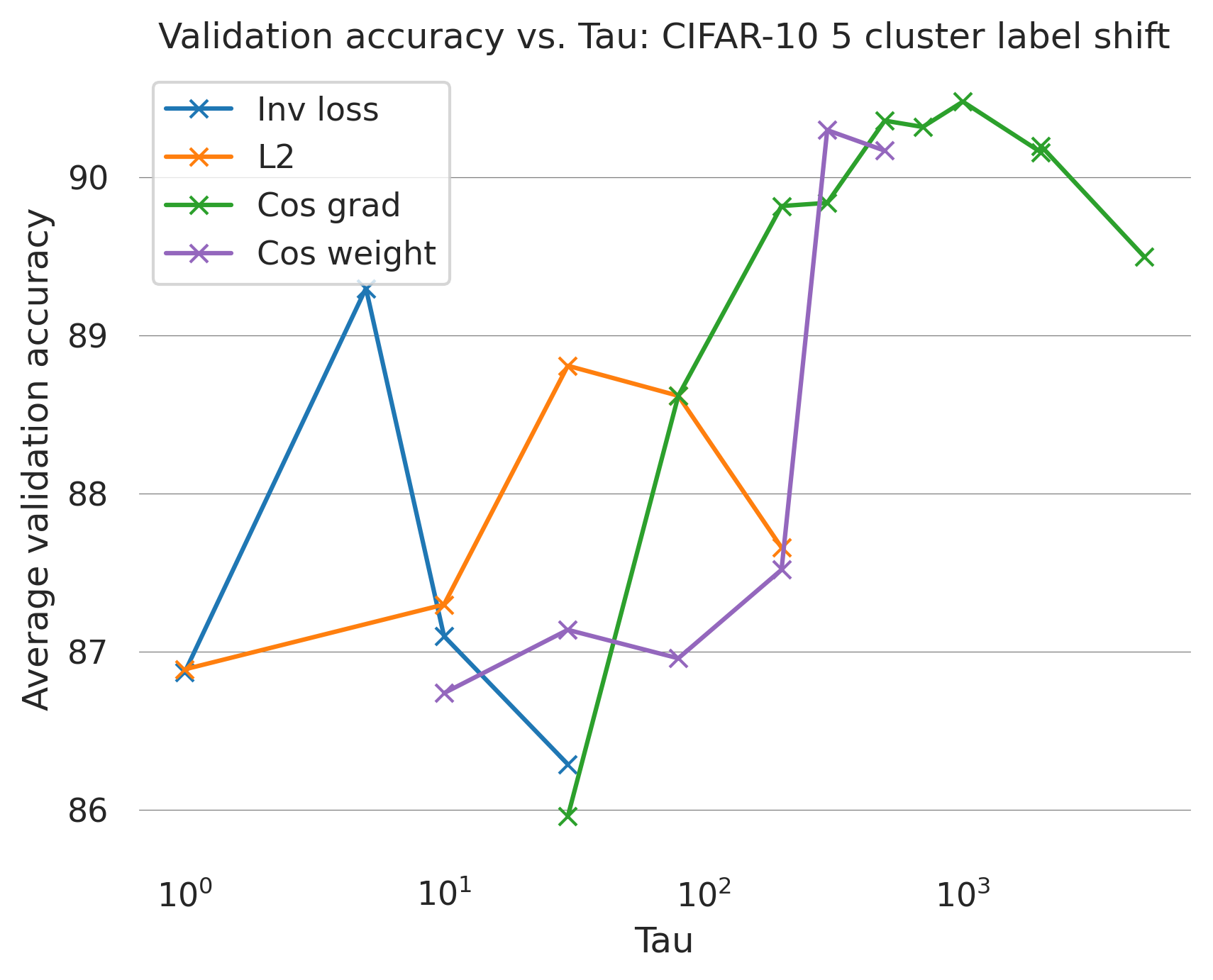}
        \caption{CIFAR-10, 5 clusters, label shift (FedAvg).}
    \end{subfigure}
    \hfill
    \begin{subfigure}[b]{0.45\textwidth}
        \centering
        \includegraphics[width=\textwidth]{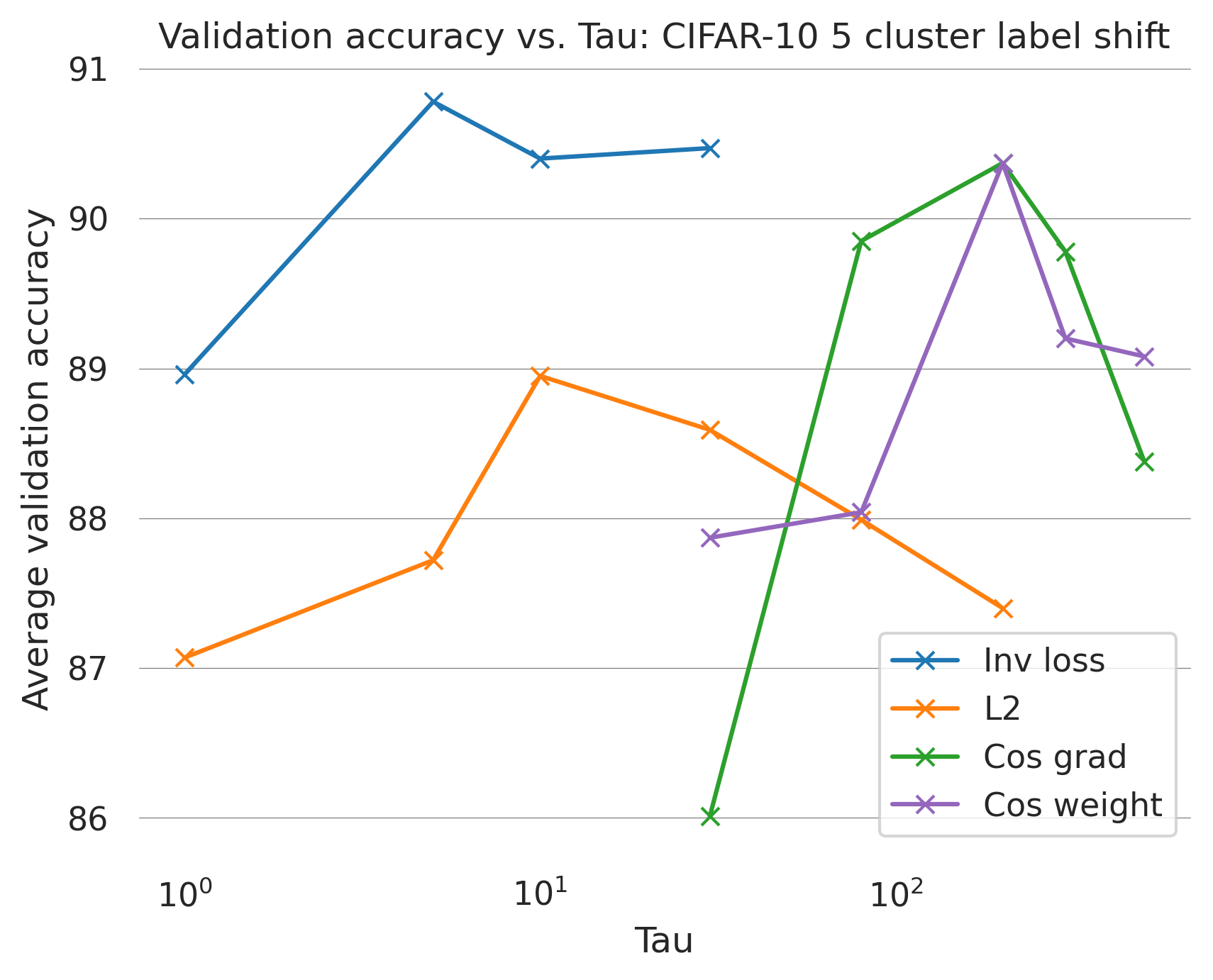}
        \caption{CIFAR-10, 5 clusters, label shift (FedSim).}
    \end{subfigure}
    \hfill
    \begin{subfigure}[b]{0.45\textwidth}
        \centering
        \includegraphics[width=\textwidth]{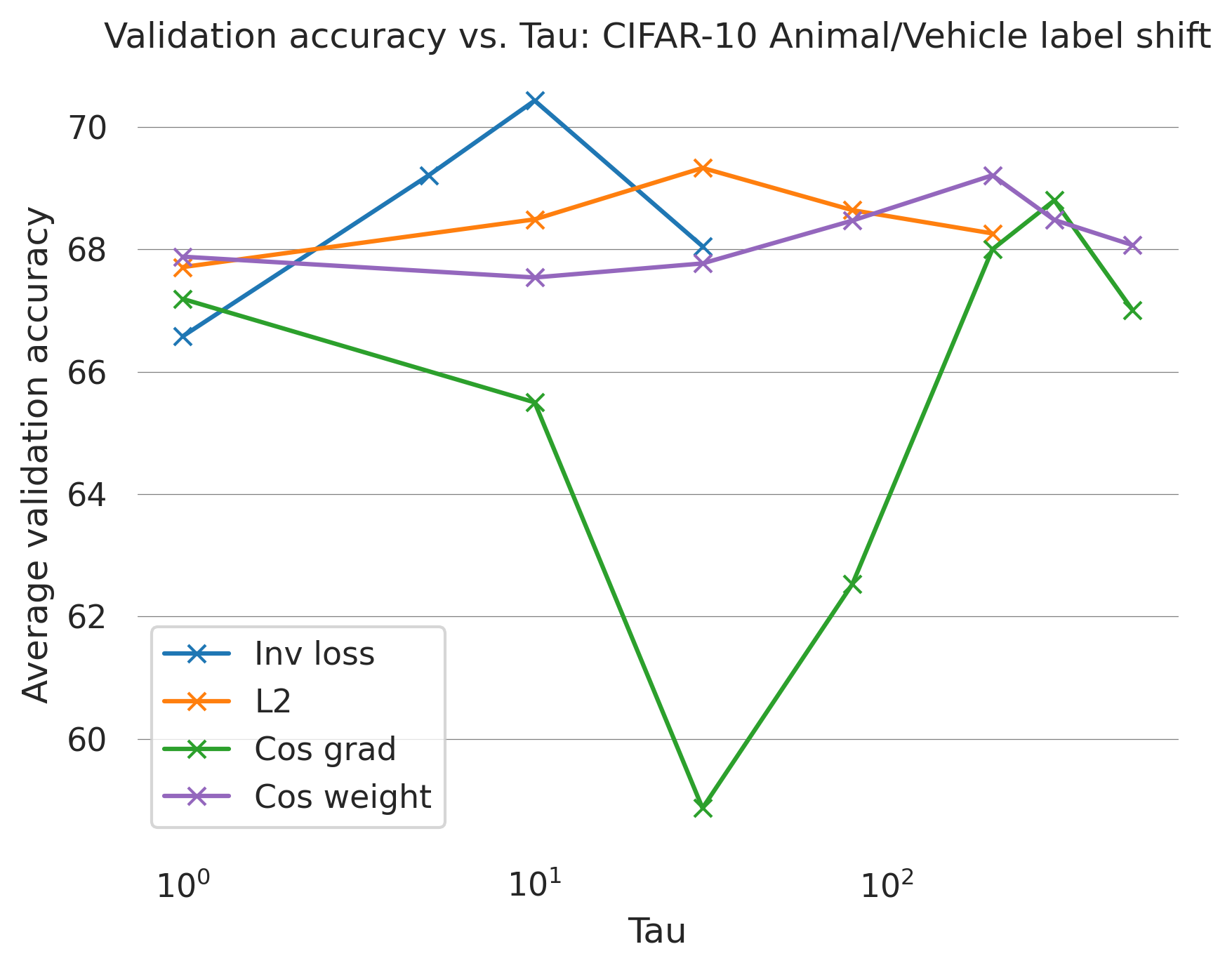}
        \caption{CIFAR-10, 2 clusters, label shift (FedAvg).}
    \end{subfigure}
    \hfill
    \begin{subfigure}[b]{0.45\textwidth}
        \centering
        \includegraphics[width=\textwidth]{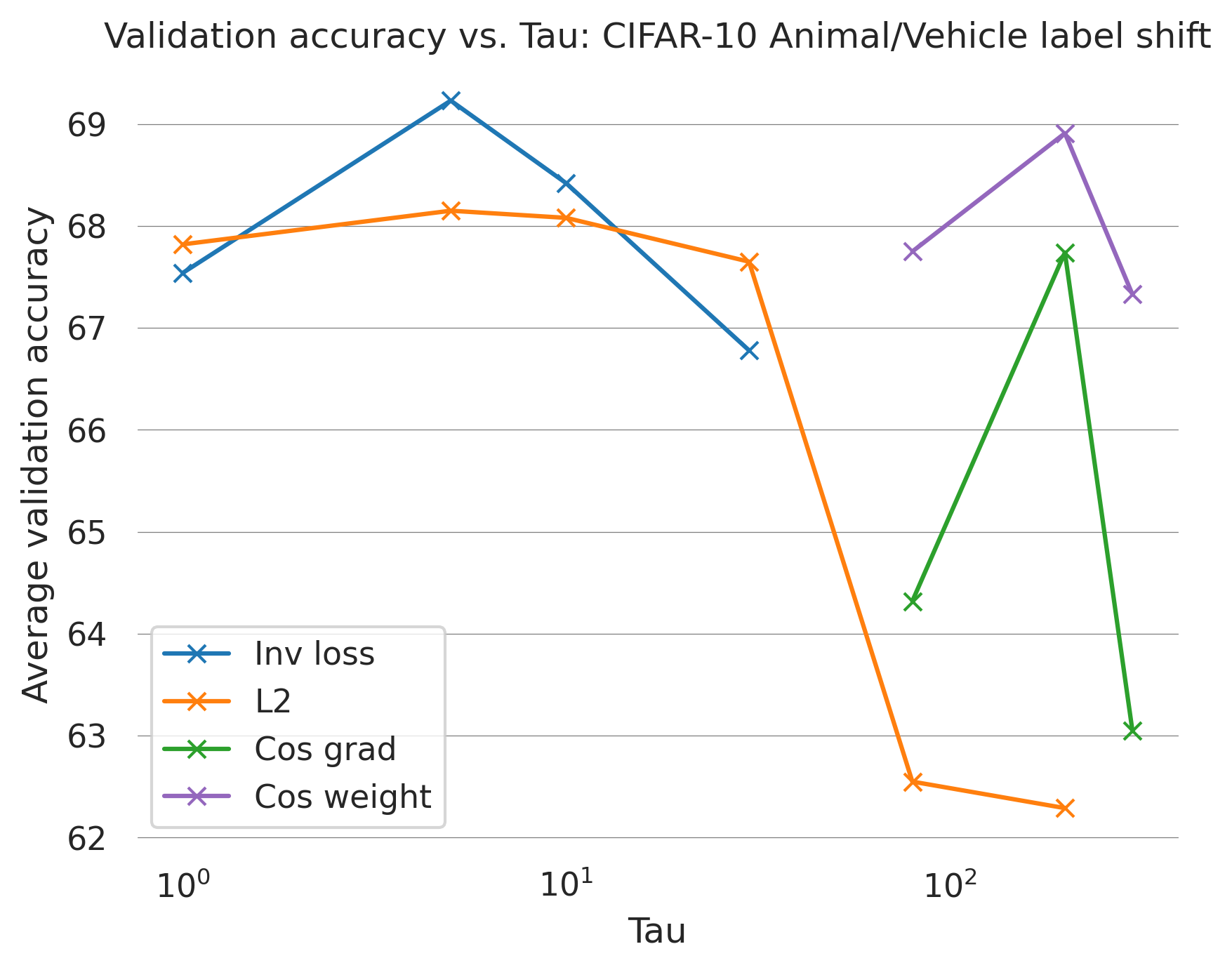}
        \caption{CIFAR-10, 2 clusters, label shift (FedSim).}
    \end{subfigure}
    \caption{Sensitivity analysis of how the choice of $\tau$ effects performance for the different methods.}
    \label{fig:app-tau1}
\end{figure}

\begin{figure}[h]
    \centering
    \begin{subfigure}[b]{0.45\textwidth}
        \centering
        \includegraphics[width=\textwidth]{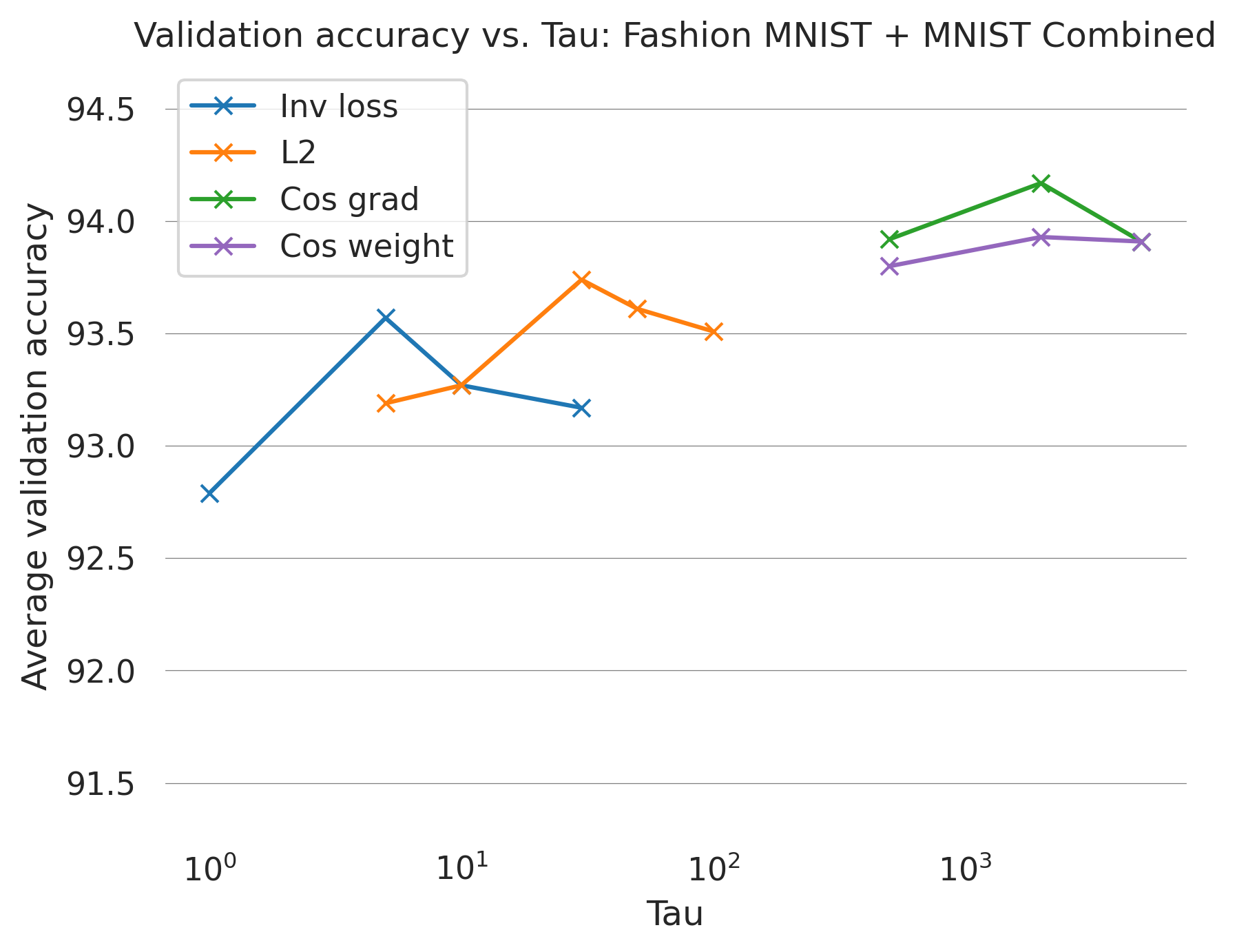}
        \caption{F/MNIST, 2 clusters, domain shift (FedAvg).}
    \end{subfigure}
    \hfill
    \begin{subfigure}[b]{0.45\textwidth}
        \centering
        \includegraphics[width=\textwidth]{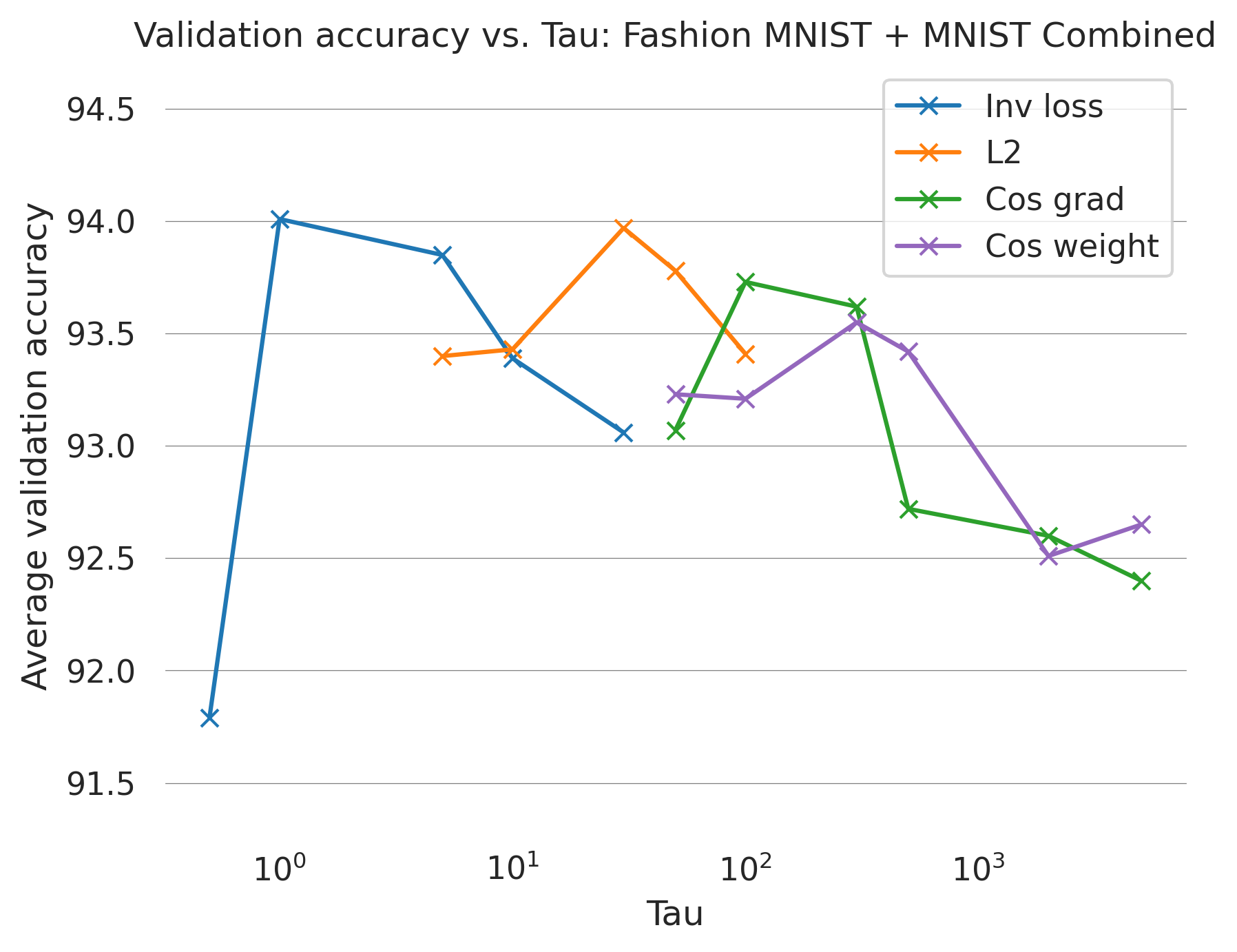}
        \caption{CIFAR-10, 2 clusters, domain shift (FedSim).}
    \end{subfigure}
    \hfill
    \begin{subfigure}[b]{0.45\textwidth}
        \centering
        \includegraphics[width=\textwidth]{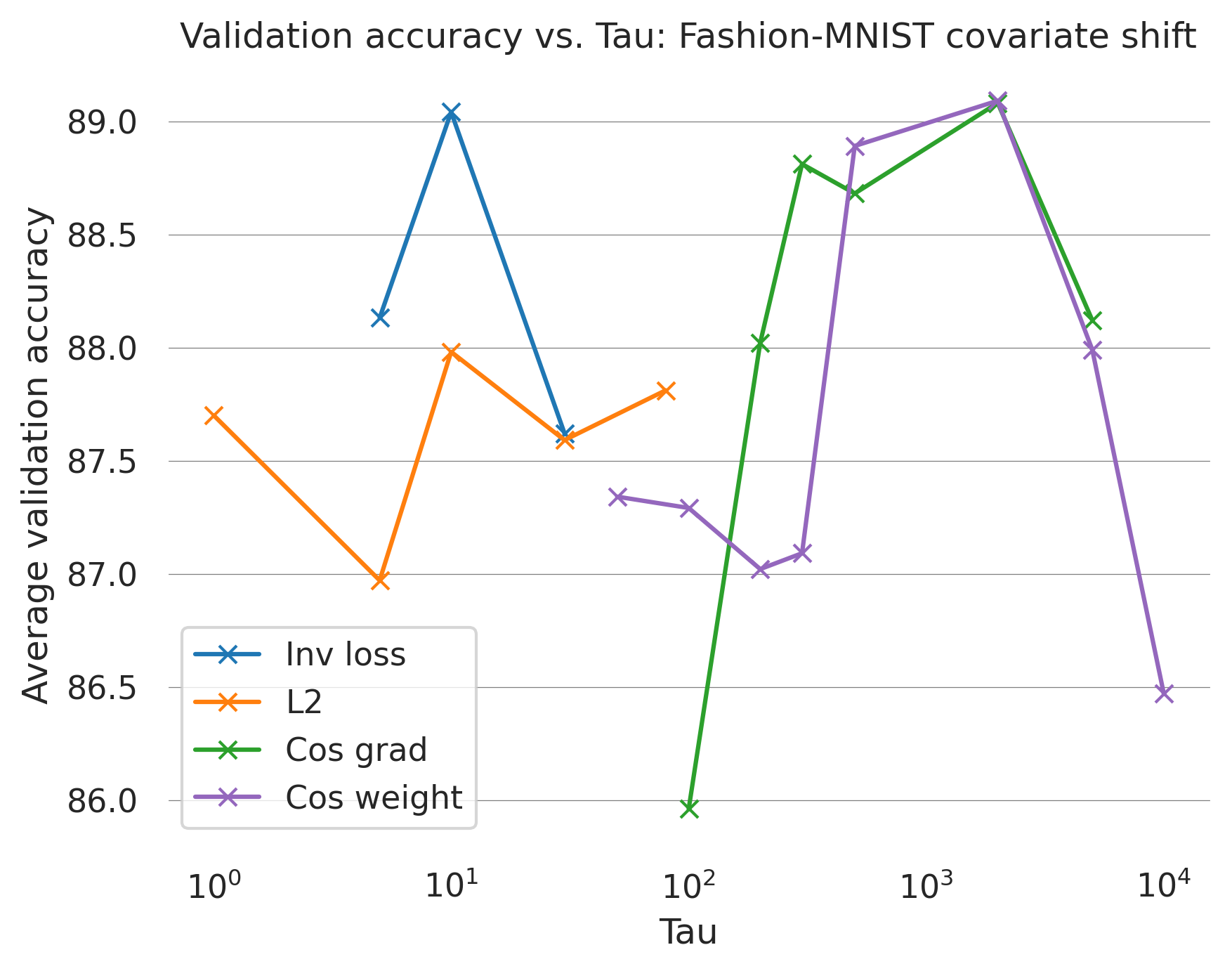}
        \caption{Fashion-MNIST, 4 clusters, covariate shift (FedAvg).}
    \end{subfigure}
    \hfill
    \begin{subfigure}[b]{0.45\textwidth}
        \centering
        \includegraphics[width=\textwidth]{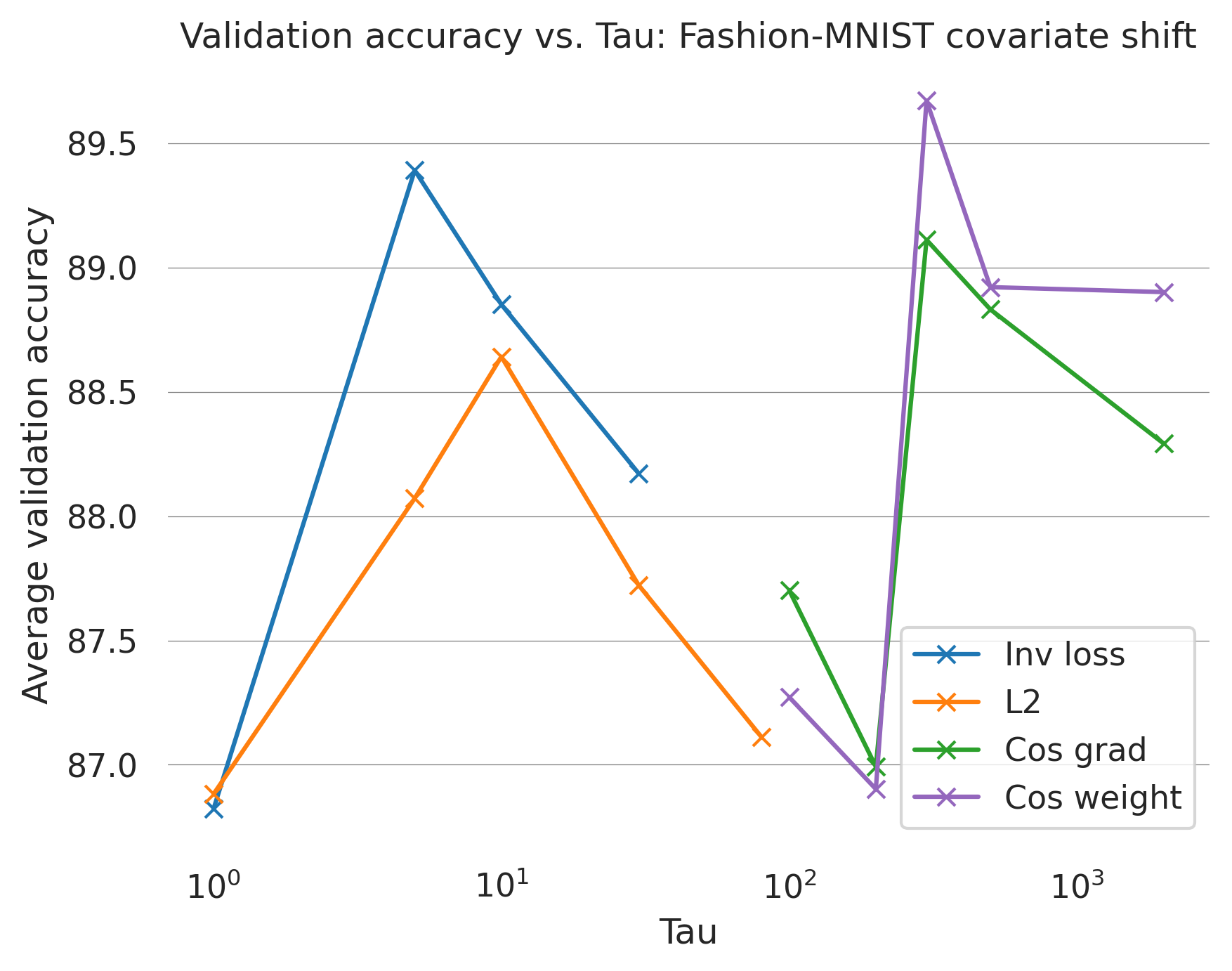}
        \caption{Fashion-MNIST, 4 clusters, covariate shift (FedSim).}
    \end{subfigure}
    \caption{Sensitivity analysis of how the choice of $\tau$ effects performance for the different methods.}
    \label{fig:app-tau2}
\end{figure}

\begin{figure}
    \centering
    \includegraphics[scale=0.45]{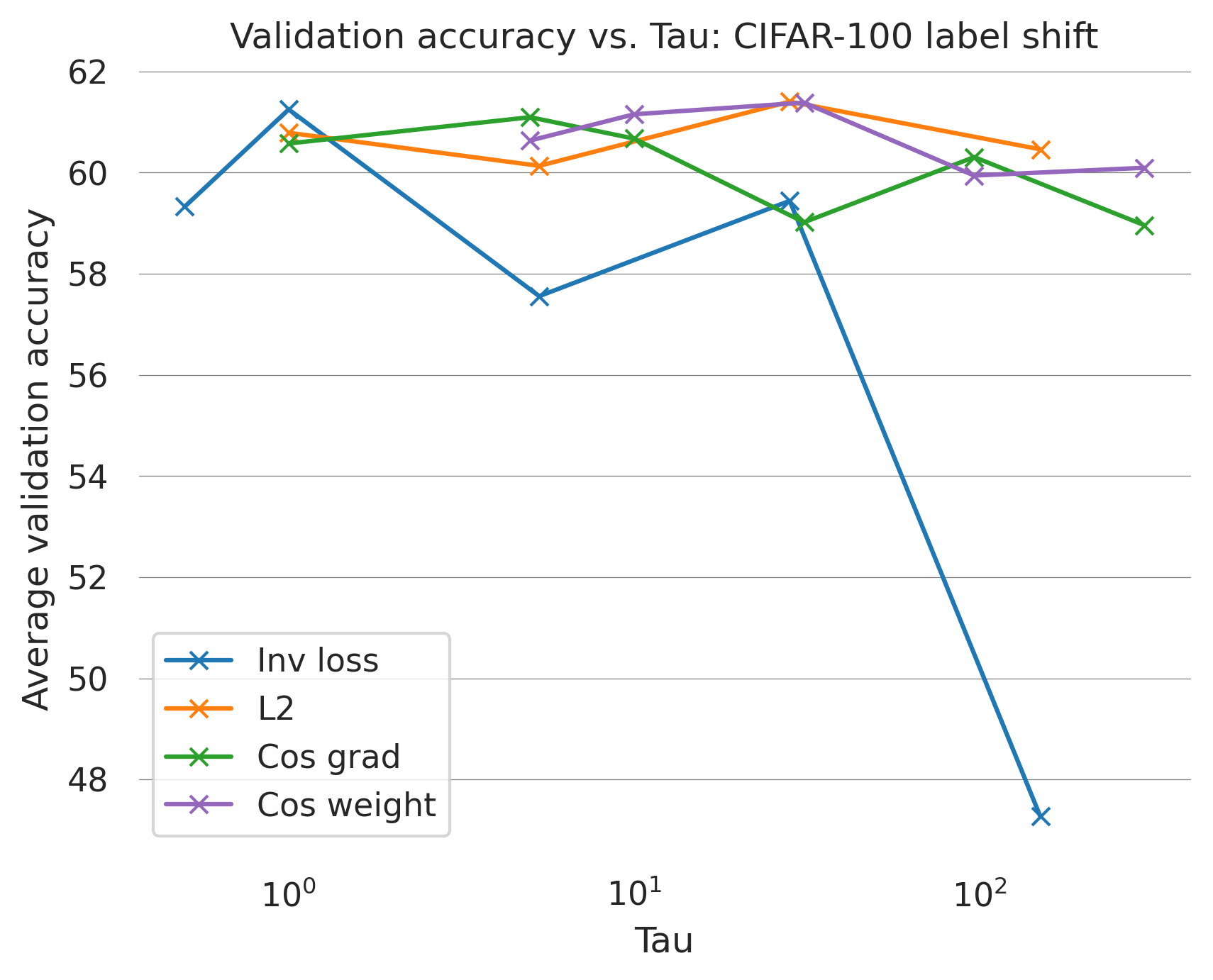}
    \caption{Sensitivity analysis of how the choice of $\tau$ effects performance for the different methods on the pre-trained CIFAR-100 experiment.}
    \label{fig:app-tau3}
\end{figure}

\begin{table}[h!]
\scriptsize
\centering
\begin{tabular}{|c|c|c|c|c|c|c|}
\hline
\multirow{2}{*}{Hyperparameters} & \multicolumn{6}{c|}{Experiment name} \\ \cline{2-7} 
 & \begin{tabular}[c]{@{}c@{}}CIFAR-10 animal/\\ vehicles label shift\end{tabular} & \begin{tabular}[c]{@{}c@{}}CIFAR-10 5 cluster\\ label shift\end{tabular} & \begin{tabular}[c]{@{}c@{}}Synthetic\\ problem\end{tabular} & \begin{tabular}[c]{@{}c@{}}Fashion-MNIST\\ covariate shift\end{tabular} & \begin{tabular}[c]{@{}c@{}}Fashion-MNIST and\\ MNIST combined\end{tabular} & \begin{tabular}[c]{@{}c@{}}CIFAR-100\\ label shift\end{tabular} \\ \hline
Learning rate & 0.001 & 0.001 & 0.003/0.008 & 0.0003 & 0.001 & 0.0001 \\ 
No comm learning rate & 0.0001 & 0.0001 & 0.008 & $5 \cdot 10^{-5}$ & $5 \cdot 10^{-5}$ & 0.0001 \\ 
No. of local epochs & 1 & 1 & 1 & 1 & 1 & 1 \\ 
No. of rounds & 300 & 300 & 50 & 300 & 300 & 270 \\ 
No. of clients & 100 & 100 & 99 & 100 & 100 & 52 \\ 
No. of clients per cluster & 40/60 & 20/20/20/20/20 & 33/33/33 & 70/20/5/5 & 50/50 & 26/13/13 \\ 
Train data size per client & 400 & 400 & 50 & 500 & 400 & 400 \\ 
Validation data size per client & 100 & 100 & 100 & 100 & 100 & 100 \\ 
Early stopping rounds & 50 & 50 & 50 & 50 & 50 & 50 \\ 
No. of neighbors sampled & 5 & 5 & 5 & 4 & 5 & 5 \\ 
No. of labels & 10 & 10 & na & 10 & 20 & 100 \\ 
No. of runs per similarity & 3 & 3 & 15 & 3 & 3 & 3 \\ 
Optimizer & Adam & Adam & Adam & Adam & Adam & Adam \\ \hline
\multicolumn{1}{|c|}{} & \multicolumn{6}{c|}{Values used for $\tau$} \\ \hline
Inv loss, \textbf{FedAvg} & 10 & 5 & 10000 & 10 & 1 & 1 \\ 
Cos grad, \textbf{FedAvg} & 300 & 1000 & 140 & 2000 & 1 & 5 \\ 
Cos weight, \textbf{FedAvg} & 200 & 300 & 140 & 2000 & 5 & 30 \\
L2, \textbf{FedAvg} & 30 & 30 & 19 & 10 & 1 & 30 \\ 
Inv loss, \textbf{FedSim} & 5 & 5 & 5000 & 5 & 1 & na \\ 
Cos grad, \textbf{FedSim} & 200 & 200 & 140 & 300 & 5 & na \\ 
Cos weight, \textbf{FedSim} & 200 & 200 & 140 & 300 & 3 & na \\ 
L2, \textbf{FedSim} & 5 & 10 & 19 & 30 & 30 & na \\ \hline
\end{tabular}
\caption{Summary of all experiment settings used, as well as the found optimal hyperparameters for each experiment.}
\label{tab:hparam}
\end{table}
\end{document}